\date{}
\title{\bfseries\papertitle}
\author{
  Aadirupa Saha \quad\quad Akshay Krishnamurthy\\
  {\normalsize Microsoft Research, NYC}\\
  {\small\texttt{\{aadirupa.saha,akshaykr\}@microsoft.com}}
}
\newtheorem{thm}{Theorem}
\newtheorem{cor}[thm]{Corollary}
\newtheorem{defn}[thm]{Definition}
\newtheorem{rem}{Remark}
\newcommand{\regsq}{{\mathrm{Reg}_{\mathrm{Sq}}}}
\newcommand{\oreg}{\texttt{SqrReg}}
\DeclareMathOperator{\Reg}{Reg}
\newcommand{\R}{{\mathbb R}}
\newcommand{\N}{{\mathbb N}}
\renewcommand{\P}{{\mathbf P}}
\newcommand{\E}{{\mathbf E}}
\newcommand{\I}{{\mathbf I}}
\newcommand{\0}{{\mathbf 0}}
\newcommand{\cK}{{\mathcal K}}
\newcommand{\cH}{{\mathcal H}}
\newcommand{\cD}{{\mathcal D}}
\newcommand{\cF}{{\mathcal F}}
\newcommand{\cX}{{\mathcal X}}
\newcommand{\cA}{{\mathcal A}}
\newcommand{\cB}{{\mathcal B}}
\newcommand{\cP}{{\mathcal P}}
\newcommand{\hy}{{\hat y}}
\newcommand{\hf}{{\hat f}}
\newcommand{\hP}{{\hat P}}
\newcommand{\C}{{\mathbf C}}
\newcommand{\cZ}{{\mathcal Z}}
\newcommand{\cY}{{\mathcal Y}}
\newcommand{\cG}{{\mathcal G}}
\renewcommand{\cH}{{\mathcal H}}
\newcommand{\M}{{\mathbf M}}
\newcommand{\U}{{\mathbf U}}
\newcommand{\Q}{{\mathbf Q}}
\newcommand{\Y}{{\mathbf Y}}
\newcommand{\e}{{\mathbf e}}
\newcommand{\p}{{\mathbf p}}
\newcommand{\q}{{\mathbf q}}
\newcommand{\w}{{\mathbf w}}
\newcommand{\x}{{\mathbf x}}
\newcommand{\stdb}{\textrm{StdDB(K)}}
\newcommand{\sm}{{\setminus}}
\newcommand{\zs}{\mathrm{ZS}}
\newcommand{\vn}{{\texttt{VN}}}
\def \wbreg {{\mathrm{BR}\mbox{-}\mathrm{Regret}(f^\star_1,\ldots,f^\star_T)}}
\def \breg {{\mathrm{BR}\mbox{-}\mathrm{Regret}}}
\def \algf {\texttt{MinMaxDB}}
\def \algc {\texttt{CCE-DB}}
\def \algl {\texttt{CCE-linDB}}
\def \papertitle {{Efficient and Optimal Algorithms for Contextual Dueling Bandits under Realizability}}
\newcommand{\red}[1]{\textcolor{red}{#1}}
\newcommand{\blue}[1]{\textcolor{blue}{#1}}
\begin{document}

\maketitle

\begin{abstract}
We study the $K$-armed contextual dueling bandit problem, a sequential
decision making setting in which the learner uses contextual
information to make two decisions, but only observes
\emph{preference-based feedback} suggesting that one decision was
better than the other. We focus on the regret minimization problem
under realizability, where the feedback is generated by a pairwise
preference matrix that is well-specified by a given function class
$\cF$. We provide a new algorithm that achieves the optimal regret
rate for a new notion of best response regret, which is a strictly
stronger performance measure than those considered in prior works. The
algorithm is also computationally efficient, running in polynomial
time assuming access to an online oracle for square loss regression
over $\cF$. This resolves an open problem of~\citet{CDB} on oracle
efficient, regret-optimal algorithms for contextual dueling bandits.

\end{abstract}

\section{Introduction}
\label{sec:intro}


In many decision-making scenarios, a significant obstacle towards
deploying reinforcement learning is the design of the reward function.
For example, in personalization applications, reward engineering to
align the performance of the reinforcement learning algorithm with
application-specific objectives often requires months of iterating in
a trial-and-error manner, requiring substantial effort from domain
experts and resulting in sub-optimal system performance in the
interim. Thus instead of engineering a reward function
that may misalign with long-term objectives, it may be beneficial to
re-design the system to collect more reliable signals that enable
efficient optimization. 

Preference/comparative feedback is a particular
signal
that is often available --- or can be made available --- in many
applications, and is often more reliable than ordinal/absolute
rewards. Preference-based feedback can be easily collected in
applications including online retail chain optimization, prediction
markets, tournament ranking, recommender systems, search engine
optimization and information retrieval, robotics, multiplayer games, and elsewhere. As just one
example,~\citet{hofmann2013fidelity} interleave results from two
different search engine ranking algorithms and use click information
as a preference signal, which they show has high fidelity and is
significantly less expensive than collecting relevance judgements from
experts.

Motivated by such scenarios, recent work from the machine learning
community has studied online decision making from pairwise/preference
feedback through the \emph{dueling bandits} framework. This framework
is a variant of the widely-studied multi-armed bandit (MAB)
setting~\citep{auer02,Slivkins19,CsabaNotes18}, where rather than
receive rewards, the learner obtains preference-based feedback
information. 
%
%
In particular, the learner repeatedly selects a pair of items to be
compared to each other in a so-called duel and observes a stochastic binary
outcome, indicating the winning item in this duel.  Performance is
often measured by some notion of regret, and while many definitions
have been studied~\citep{Yue+12,Zoghi+14RUCB,ADB}, they all
intuitively ask that the learner identify the good actions, i.e.,
those that are typically favored amongst the others.
Over the last two decades, several algorithms have been proposed for
dueling bandit problems~\citep{Ailon+14,Zoghi+14RUCB,Komiyama+15,DTS}
and generalizations to subset-wise preference
feedback~\citep{Sui+17,Brost+16,SG19,Ren+18,SGwin18}.

In practice, preferences over items can vary substantially with
auxilliary/side information like user demographics, search query, etc.;
however, the majority of dueling bandits literature does not leverage
contextual information to learn higher-quality decision making
policies. This shortcoming motivated~\citet{CDB} to formulate the
\emph{contextual dueling bandits} problem, in which the agent first
receives a context, chooses a pair of actions, and then observes the
outcome of the duel, with the goal of learning a \emph{policy} that
maps contexts to actions that typically win \emph{in that
context}. They formulated a new notion of regret and designed two types
of algorithms: (1) a regret-optimal algorithm that is computationally
intractable, and (2) a computationally tractable algorithm with
suboptimal regret. Thus, their work left open the following question:
\begin{quote}
\begin{center}
  \emph{Is there a computationally efficient and statistically optimal algorithm for contextual dueling bandits?}
\end{center}
\end{quote}
In this paper, we resolve this question in the affirmative under a
natural realizability assumption.

\subsection{Our contributions}
\label{sec:cont}
Our main contribution is a new efficient algorithm for contextual
dueling bandits. To state the guarantee, let $\Xcal$ be a context
space, let $\cA:=[K]$ be an action space of size $K$, and let $\Pcal :=
\{\P \in [-1,1]^{K \times K}: P[i,j] = -P[j,i], P[i,i] = 0\}$ denote the set of \emph{preference matrices}, which are
skew-symmetric matrices with bounded entries and $0$ along the
diagonal. We interpret a preference matrix $\P$ as encoding a zero-sum game, in which the row player's goal is to maximize their value and the column player's goal is to minimize.

In a stochastic contextual dueling bandit instance, the
learner interacts with a distribution $\Dcal$ over $\Xcal\times\Pcal$
via the following protocol: at each round $t$ (1) nature samples
$(x_t,\P_t) \sim \Dcal$ and reveals $x_t$ to the learner, (2) learner
chooses (potentially randomly) two actions $(a_t,b_t) \in [K]^2$, (3)
learner observes $o_t \sim
\textrm{Ber}(\tfrac{P_t[a_t,b_t]+1}{2})$. The goal of the learner is
to choose actions $(a_t,b_t)$ so as to minimize the \emph{best-response regret} over $T$ rounds:
\begin{align}
	\breg_T := \sum_{t=1}^T \max_{\q_t \in \Delta_K} \frac{1}{2} \EE_{a \sim \q_t} \EE_{(a_t,b_t) \sim \p_t} \sbr{f^\star(x_t)[a, a_t] + f^\star(x_t)[a,b_t]}.
	\label{eq:br_reg}
\end{align} 
Here we use $(a_t,b_t) \sim \p_t$ to capture the learner's randomness
and define $f^\star: x \mapsto \EE_{\Dcal}[\P \mid x]$ to denote the
conditional mean of the preference matrix $\P$ given context
$x$. Intuitively, achieving low regret requires that the learner's
distribution $\p_t$ cannot be exploited by an adversary that knows the
expected preference matrix $f^\star$, so that $\p_t$ is typically preferred over any other
distribution.

We consider the function approximation setting, where we are given a
function class $\Fcal: \Xcal\to \Pcal$ that we may use to learn the
conditional mean function $f^\star$. To enable this we make two
somewhat standard assumptions about $\Fcal$, which have also appeared in prior work~\citep{dylan1,dylan2,simchi20,ag12,dylan3}.

\begin{assumn}[Function approximation assumptions, informal]
We assume \emph{realizability}, that is $f^\star \in \Fcal$. We also
assume access to an \emph{online square loss oracle} for $\Fcal$
whose $T$-step square loss regret w.r.t. $\Fcal$ is bounded by
a known function $\regsq(T)$.
\end{assumn}
See Section~\ref{sec:oreg} for a detailed description of the online
square loss oracle. In this setting, our main theorem is as follows.

\begin{thm}
  \label{thm:main}
  Under the above function approximation assumptions, Algorithm \textbf{\algf}$(\gamma)$, with learning rate $\gamma = O\Big(\sqrt{\frac{KT}{\regsq(T)}}\Big)$ ensures:
  \begin{align}
    \breg_T \leq O\rbr{\sqrt{KT \cdot \regsq(T)} }.
  \end{align}
  for any $T > 4K \regsq(T)$. Additionally, \textbf{\algf}\, incurs at most a $\textrm{poly}(K)$
  factor of run-time overhead over the square loss oracle.
\end{thm}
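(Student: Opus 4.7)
The plan is to analyze Algorithm $\algf(\gamma)$ as a dueling-bandit analogue of the SquareCB approach: at each round $t$, the algorithm queries the online square loss oracle to obtain an estimate $\hf_t(x_t) \in \cP$ of the unknown preference matrix, and then samples a pair $(a_t, b_t)$ from a distribution $\p_t$ whose marginals are chosen as an approximate min-max (Nash) strategy for the skew-symmetric game $\hf_t(x_t)$, tilted by the learning rate $\gamma$ to maintain sufficient exploration across all actions.

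The first step is a regret decomposition. Letting $\mu_t := \tfrac{1}{2}(\mu^a_t + \mu^b_t)$ denote the average marginal of $\p_t$ across its two coordinates, the best-response regret can be rewritten as $\breg_T = \sum_t \max_{\q} \EE_{a \sim \q, b \sim \mu_t}[f^\star(x_t)[a,b]]$, and adding and subtracting $\hf_t$ inside the expectation gives
\begin{align*}
\breg_T \leq \sum_{t=1}^T \max_{\q} \EE_{a \sim \q,\, b \sim \mu_t}[\hf_t(x_t)[a,b]] + \sum_{t=1}^T \max_{\q} \EE_{a \sim \q,\, b \sim \mu_t}[(f^\star - \hf_t)(x_t)[a,b]].
\end{align*}
Because $\hf_t(x_t)$ is skew-symmetric and $\p_t$ is a min-max strategy on it, the first sum is nonpositive, so everything comes down to the second ``estimation bias'' term.

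The second and crucial step is to bound this bias. The oracle guarantee controls $\sum_t \EE[(\hf_t(x_t)[a_t,b_t] - f^\star(x_t)[a_t,b_t])^2] \le \regsq(T)$ only on the \emph{played} pair $(a_t,b_t) \sim \p_t$, whereas we need $\max_a \EE_{b \sim \mu_t}[(\hf_t - f^\star)(x_t)[a,b]]$ where $a$ is chosen by an adversarial best response. The idea is to calibrate $\gamma$ so that each action $a$ receives enough marginal probability under $\p_t$; an AM-GM/completing-the-square manipulation of the form $xy \le \tfrac{x^2 \gamma}{4} + \tfrac{y^2}{\gamma}$ then yields a per-round bound roughly $\tfrac{K}{\gamma} + \tfrac{\gamma}{4} \cdot \EE_{(a',b') \sim \p_t}[((\hf_t - f^\star)(x_t)[a',b'])^2]$. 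Summing over $t$ and invoking the oracle guarantee bounds the estimation bias by $\tfrac{KT}{\gamma} + \tfrac{\gamma}{4} \regsq(T)$, and the prescribed $\gamma = \Theta(\sqrt{KT/\regsq(T)})$ equalizes the two terms to give $O(\sqrt{KT \cdot \regsq(T)})$.

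The main obstacle will be executing the change-of-measure step in the two-player setting. In single-action contextual bandits the SquareCB/IGW formula simultaneously maintains the exploitation structure and lower-bounds exploration, but here we must construct a joint distribution $\p_t$ over \emph{pairs} that is (i) still a min-max strategy on $\hf_t(x_t)$, so that the first sum in the decomposition stays nonpositive, and (ii) has marginals that charge each adversarial entry $(a,b)$ to the played-pair square loss at $(a_t,b_t)$ without losing a factor of $K$ beyond $\sqrt{K}$. The condition $T > 4K\,\regsq(T)$ corresponds to the regime $\gamma \gtrsim 1$ in which the tilted distribution is well-defined, and the $\textrm{poly}(K)$ computational overhead arises because each round's $\p_t$ is obtained by solving a min-max problem of dimension polynomial in $K$.
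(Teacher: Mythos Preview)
Your high-level plan — reduce to the oracle's square loss via an AM--GM/change-of-measure argument and balance $K/\gamma$ against $\gamma\,\regsq(T)$ — matches the paper. But the specific decomposition you propose has a real gap, and you have in fact put your finger on it yourself without resolving it.

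You want to split $\q^\top f^\star \mu_t$ into $\q^\top \hf_t \mu_t$ (claimed $\le 0$ because $\mu_t$ is a Nash strategy of the skew-symmetric game $\hf_t$) plus a bias term handled by AM--GM using a lower bound on $\mu_t[a]$. These two requirements are incompatible: an exact Nash strategy for $\hf_t$ can put zero mass on some arms, so the exploration penalty $1/\mu_t[a^\star]$ in the AM--GM step is uncontrolled; conversely, if you $\epsilon$-mix with uniform to force $\mu_t[a]\ge \epsilon/K$, the first term is no longer nonpositive and contributes $O(\epsilon)$ per round. Carrying this through yields $\epsilon T + \tfrac{KT}{\gamma\epsilon} + \tfrac{\gamma}{4}\regsq(T)$, which optimizes only to $T^{2/3}$-type regret, not $\sqrt{T}$.

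The paper does not try to make the $\hf_t$ term nonpositive on its own. Instead it chooses $\p_t\in\Delta_K$ satisfying the \emph{coupled} constraint
\[
\forall i\in[K]:\quad \sum_{b}\hf_t(x_t)[i,b]\,p_t[b] \;+\; \frac{2}{\gamma\,p_t[i]} \;\le\; \frac{5K}{\gamma},
\]
so that for every $\q$, the $\hf_t$ term and the exploration cost $\sum_a q[a]/p_t[a]$ are bounded \emph{together} by $5K/\gamma$. The AM--GM step on the bias $\q^\top(f^\star-\hf_t)\p_t$ then produces a $+\tfrac{1}{2\gamma}\sum_a q[a]^2/p_t[a]$ term that is absorbed by the $-\tfrac{2}{\gamma}\sum_a q[a]/p_t[a]$ coming from the constraint, leaving $\tfrac{\gamma}{2}\EE_{\p_t\times\p_t}[(\hf_t-f^\star)^2] + 5K/\gamma$ per round. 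Feasibility of this constraint is what requires the real idea: one shows it via Sion's minimax theorem on the objective $\q^\top \hf_t \p + \tfrac{2}{\gamma}\sum_a q[a]/p[a]$ (after smoothing $\p$ so the objective is finite), swaps the $\min_\p$ and $\max_\q$, and then plugs in $\p=\q$; skew-symmetry is used here, as $\q^\top \hf_t \q = 0$, not via a Nash property of $\p_t$. The condition $T>4K\,\regsq(T)$ is exactly the requirement $\gamma\ge 2K$ needed for this argument, and the $\mathrm{poly}(K)$ overhead is the convex feasibility solve for $\p_t$.
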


In the sequel, we list several instantiations for the online square
loss oracle, but briefly the algorithm achieves (1) the optimal
$\sqrt{KT\log |\Fcal|}$ regret for finite function classes, (2)
$\sqrt{K^3T}$ regret for the non-contextual problem, which is a factor
of $K$ worse than the optimal rate, and (3) $\sqrt{KdT}$ regret when
$\Fcal$ is (low-dimensional) linear functions.  This is the first
oracle-efficient algorithm for contextual dueling bandits with regret
scaling at the optimal $\sqrt{T}$-rate. See Corollary \ref{cor:minmax} for detailed discussions.

\paragraph{Other Contributions.} 
In addition to our main result (Theorem~\ref{thm:main}), the paper contains the following contributions:

	\vspace{3pt} \noindent
	\textbf{(1).} The notion of best-response regret in Eq.~\eqref{eq:br_reg} itself is new, and we provide connections to other regret definitions in the literature. In particular, we show that it upper bounds the policy regret definition from~\citet{CDB} and also subsumes some other notions studied in dueling bandits. 
	
	\vspace{3pt} \noindent
	\textbf{(2).} We also make a connection between the dueling bandits literature and the literature on Markov games, which we believe was previously unexplored. In particular, the Markov games literature has used game-theoretic techniques to developed UCB-based algorithms that can be applied directly to dueling bandits problems. We elaborate on this connection and provide complete analyses for these algorithms in an effort to encourage more cross-pollination between these communities (Sec. \ref{sec:linf}).


	\vspace{3pt} \noindent
	\textbf{(3).} Finally, we provide some evidence suggesting that in the absence of realizability, significantly new techniques are required to develop oracle-efficient $\sqrt{T}$-regret algorithms for contextual dueling bandits. Our evidence does not rule out such a result altogether, but it shows that the existing techniques from the standard contextual bandits literature are insufficient. We leave developing such an algorithm as an interesting open problem (Sec. \ref{sec:example}). 


\subsection{Related work}
\label{sec:rel}
%

\paragraph{Non-contextual dueling bandits.}
Our work builds on a large body of literature on the non-contextual
(stochastic) dueling bandits problem, which can be seen as a special
case of our setup where there is only a single context, $|\Xcal|=1$,
and hence a single preference matrix $\P$. For the non-contextual
problem, the dominant algorithmic strategy is based on optimism in the
face of uncertainty, which is widely deployed across sequential
decision making. In terms of results, various regret definitions,
largely motivated by social choice theory, have been studied. The most
frequently used benchmark is the \emph{Condorcet winner}, which is an
arm $a^\star$ that beats all others on
average~\citep{Yue+12,Zoghi+14RUCB,Zoghi+15MRUCB,Komiyama+15,BTM}. Generalizing
slightly, one can consider a \emph{Fixed-Benchmark} regret:
\begin{align}
	\label{eq:stdb_reg}
\textrm{FB-Regret}_T:= \E_{a^\star \sim \q^\star}\bigg[\sum_{t=1}^T\E_{(a_t,b_t)\sim\p_t}\frac{P[a^\star,a_t] + P[a^\star,b_t]}{2}\bigg],
\end{align}
where $q^\star \in \Delta_K$ is a fixed (possibly unknown)
distribution over the actions. Regret against the Condorcet winner is
a special case, although a Condorcet winner may not exist for a given
preference matrix $\P$~\citep{Jamieson+15}. To connect this definition with our results,
note that our definition of best-response regret, Eqn.~\eqref{eq:br_reg}, upper bounds fixed-benchmark regret for any
$q^\star$:
\begin{fact}
	\label{prop:creg_vs_breg}
	For the non-contextual setting, we have $\mathrm{FB}\mbox{-}\mathrm{Regret}_T \le \mathrm{BR}\mbox{-}\mathrm{Regret}_T$, for any $q^\star \in \Delta_K$.
\end{fact}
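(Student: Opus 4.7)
The plan is essentially a one-line reduction: the best-response regret is defined by a per-round maximum over all $\q_t \in \Delta_K$, while the fixed-benchmark regret corresponds to evaluating the same per-round quantity at the single (round-independent) choice $\q_t = q^\star$. Since a maximum dominates any particular evaluation, the inequality should follow by ``plug in $q^\star$ and compare termwise.''

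More concretely, in the non-contextual setting $f^\star(x_t) \equiv P$, so for each round $t$
\[
\max_{\q_t \in \Delta_K} \tfrac{1}{2} \E_{a \sim \q_t} \E_{(a_t,b_t) \sim \p_t}\bigl[P[a,a_t] + P[a,b_t]\bigr] \;\ge\; \tfrac{1}{2} \E_{a \sim q^\star} \E_{(a_t,b_t) \sim \p_t}\bigl[P[a,a_t] + P[a,b_t]\bigr],
\]
simply by the definition of maximum. I would then sum this inequality over $t = 1,\ldots,T$. On the left, summing gives exactly $\breg_T$. On the right, since $q^\star$ does not depend on $t$, I can pull the expectation $\E_{a^\star \sim q^\star}$ outside the sum using linearity of expectation, yielding
\[
\E_{a^\star \sim q^\star}\Biggl[\sum_{t=1}^T \E_{(a_t,b_t) \sim \p_t} \tfrac{P[a^\star,a_t] + P[a^\star,b_t]}{2}\Biggr],
\]
which is precisely $\mathrm{FB}\mbox{-}\mathrm{Regret}_T$.

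There is essentially no technical obstacle here: the claim is almost a tautology once one notices that FB-Regret fixes a single comparator distribution whereas BR-Regret optimizes over a fresh comparator each round. The only minor care needed is the swap of the outer expectation over $a^\star \sim q^\star$ with the sum over $t$, which is valid by Fubini/linearity of expectation since $q^\star$ is non-random and independent of $t$. No concentration, no oracle, no function-class considerations are invoked, which matches the fact that this is stated as a ``Fact'' rather than a theorem.
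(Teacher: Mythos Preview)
Your proposal is correct and matches the paper's own justification, which simply notes that $\breg_T$ takes a per-round $\max_{\q_t \in \Delta_K}$ inside the sum while $\mathrm{FB}\mbox{-}\mathrm{Regret}_T$ evaluates the same expression at the fixed $q^\star$, so the former dominates. If anything, you have spelled out more detail (the termwise inequality and the Fubini/linearity step) than the paper does.
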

As such, Theorem~\ref{thm:main} immediately yields guarantees for the
non-contextual fixed-benchmark setting. In particular, we obtain
$\tilde{O}(\sqrt{K^3T})$ worst-case fixed-benchmark regret, which is
slightly worse than the minimax optimal $\tilde{O}(\sqrt{KT})$
rate~\citep{CDB}. On the other hand, our regret notion is strictly stronger, and,
most importantly, our results generalize to the contextual setting
which does seem possible using techniques from this literature.

Beyond fixed-benchmark regret, notions involving
Borda~\citep{Busa14survey,Jamieson+15,falahatgar_nips} and Copeland
scores~\citep{Zoghi+15,komiyama+16,DTS} have also been considered. Our
work does not directly yield results for these notions. However, we
note that, as discussed by~\citet{CDB}, these notions fail the
\emph{independence of clones criterion}~\citep{schulze2011new}, which
makes them somewhat undesirable in contextual settings.  Finally we
note that many other variations of the non-contextual dueling bandits
problem have been considered, including adversarial preference
matrices~\citep{Adv_DB,ADB}, best-arm
identification~\citep{SGwin18,SG20,BTM}, full-ranking
\citep{Busa_pl,falahatgar_nips,SGrank18}, top-set detection
\citep{Busa_top,MohajerIcml+17,ChenSoda+18}, etc.
A very thorough literature survey on the recent developments in preference bandits can be found in \cite{Busa21survey,sui2018advancements}.  

\paragraph{Contextual dueling bandits.}
We are only aware of the following works that study the contextual setting: The
first is the work of~\citet{Yue+09}, which proposes a
policy-gradient style algorithm, and establishes a $T^{3/4}$ style
regret bound under convexity assumptions on the preference model. 
A follow-up work by \citet{kumagai2017regret} also uses gradient-based techniques to show an improved $T^{1/2}$ regret guarantee, but this result requires even stronger assumptions on the preference model. 

Another line of work considers contextual dueling bandits under a
special class of utility based preferences
\citep{pbo,Sui+17,S21}. \citet{S21} provides two $\tilde O(T^{1/2})$
regret algorithms assuming the preferences are a function of
underlying utility scores of the individual arms, where utility scores
of each arms are assumed to be a linear function of the
arm-features. \citet{pbo} makes a similar assumption and provides empirical results, but they do not establish any theoretical guarantees. 
\citet{Sui+17} assumes the preference relations to be
generated from an unknown Gaussian process model, but also do not obtain regret bounds for their algorithms. 
%
In comparison, we do not require any assumptions (beyond realizability)
on the preference model, and we still obtain a $T^{1/2}$ rate.

Closest to our work, is the paper of~\citet{CDB}, which studies the
contextual dueling bandits problem with an abstract policy set $\Pi:
\Xcal \to \Acal$ and without realizability. They propose a minimax
notion of regret given by:
\begin{align*}
	\mathrm{Policy}\mbox{-}\mathrm{Regret}_T := \max_{\pi \in \Pi} \sum_{t=1}^T \frac{1}{2} \sbr{f^\star(x_t)[\pi(x_t), a_t] + f^\star(x_t)[\pi(x_t),b_t]}.
\end{align*}
This definition differs from our best-response notion in two ways: (1)
the adversary chooses a policy for all $T$ rounds, rather than a
per-round distribution $\q_t$ and (2) we include the expectation over
the learner's randomness in our definition. Based on these
differences, we can show that our definition upper bounds the
definition of~\citet{CDB}:
\begin{restatable}[]{fact}{pvsbreg}
	\label{prop:preg_vs_breg}
	Let $\Pi$ be given, with $|\Pi| < \infty$. Then for any learner that chooses $(a_t,b_t) \sim \p_t$ at round $t$, we have
	\begin{align*}
		\mathrm{Policy}\mbox{-}\mathrm{Regret}_T \leq \breg_T + \sqrt{T \log (|\Pi|/\delta)},
	\end{align*}
	with probability $1-\delta$. 
\end{restatable}
Note that the minimax rate for policy regret is $\sqrt{KT \log
  |\Pi|}$, so the additive term is of lower order.  Additionally, we
are assuming realizability, without which it is impossible
to attain sublinear best-response regret, while minimizing policy
regret is always possible.

Apart from the difference in regret definition,~\citet{CDB} provides
two algorithmic results: an inefficient sparring algorithm that achieves
the optimal regret rate, and an $\epsilon$-greedy algorithm that
achieves a sub-optimal $T^{2/3}$-style regret assuming access to an
\emph{offline policy optimization oracle}. In comparison, we give
an algorithm with optimal regret assuming realizability as well as access
to an \emph{online square loss minimization oracle}. We emphasize that
the oracle models are quite different, so the results are not directly
comparable. However, previous experimentation with standard contextual
bandits suggests that algorithms based on square loss minimization may
be more effective in practice~\citep{dylan3}.

\paragraph{Markov games.}
Finally, we highlight a growing body of work on preference based
reinforcement learning in the Markov games
framework~\citep{littman1994markov}. Broadly speaking, a Markov game
models a multi-step decision making problem where several players
compete to maximize their payoff over the course of an episode. While
the specific formulations vary considerably, one formulation can be
seen as a multi-step generalization of (contextual) dueling bandits,
where regret is measured using our best response notion (which, as
discussed, subsumes many other notion in the
literature)~\citep{bai20,xie+20,bai+20}. In particular,~\citet{xie+20}
introduce the \emph{coarse correlated equilibrium} strategy and show
that it obtains $\sqrt{T}$ regret in the linear function approximation
setting, which directly gives an efficient contextual dueling bandits
algorithm under linear realizability. We highlight this technique in
Sec.~\ref{sec:linf} in an attempt to better connect these two lines of
work. On the other hand, our results for general function classes
under realizability are novel, and we hope that they find applications
in Markov games.


\section{Problem Setup}
\label{sec:prob}


\textbf{Notation.} Let $[n] := \{1,2, \ldots n\}$, for any $n \in \N$. Given a set $\cA$, for any two items $a,b \in \cA$, we use $a \succ b$ to denote that $a$ is preferred over $b$. 
We use lower case bold letters for vectors and upper case bold letters for matrices.
$\I_d$ denotes the $d \times d$ identity matrix. For any vector $\x \in \R^d$, $\|\x\|_2$ denotes the $\ell_2$ norm of $\x$.
$\Delta_{K}:= \{\p \in [0,1]^K \mid \sum_{i = 1}^K p_i = 1, p_{i} \ge 0, \forall i \in [K]\}$ denotes the $K$-simplex, $\Delta_{K^2} = \Delta_{K \times K}$. 
$\e_i$ denotes the $i$-th standard basis vector in $\RR^K$. 
For this work, we consider the \emph{zero-sum} representation of preference matrices: 
$$
\mathcal P:= \{\P \in [-1,1]^{K \times K} \mid P[i,j] = -P[j,i], P[i,i]=0, ~~\forall i,j \in [K]\}\footnote{Standard dueling bandit literature generally represents preference matrices $\Q \in [0,1]^{K \times K}$, such that $Q[i,j]$ indicates the probability of item $i$ being preferred over item $j$. Here $\Q$ satisfies $Q[i,j] = 1-Q[j,i]$ and $Q[i,i] = 0.5$. Note both representations are equivalent as there exists a one to one mapping $\P = (2\Q -1) \in \cP$ \citep{CDB,Busa21survey}}.
$$
Note any $\P \in \mathcal P$ can be viewed as a \emph{zero-sum game}, where the two players, called row and column player resp., simultaneously choose two (possibly randomized) items from $[K]$, 
with their goal being to respectively maximize and minimize the value of the selected entry.  

\subsection{Online Regression Oracle}
\label{sec:oreg}
An online regression oracle~\citep[][Chapter 3]{PLG06}, is an
algorithm, which we denote by \oreg, and which operates in the
following online protocol: on each round $t$ (1) it receives an
abstract input $z_t \in \cZ$, from some input space $\cZ$, chosen
adversarially by the environment, (2) it produces a real-valued
prediction $\hy_t \in \cY \subset \R$ where $\cY$ is some output space, and (3) it observes the true response $y_t
\in \cY$ and incurs loss $\ell(\hy_t,y_t) := (\hy_t - y_t)^2$.
The goal of the oracle is to predict the outcomes as well as the best function in a given function class $\cF:= \{ f: \cZ \mapsto \R \}$, such that for every sequence of outcomes, the square loss regret is bounded.\footnote{The square loss itself does not play a crucial role, and can be replaced by other loss functions that is strongly convex with respect to the predictions~\citep{dylan1}.}

Formally, adopting the notation of~\cite{dylan1}, at time $t$ and for input $z \in \cZ$, \oreg\, can be seen as a mapping 
$\hy(z):=$\oreg$(z, \{z_\tau,y_\tau\}_{\tau = 1}^{t-1})$. Note this
corresponds to the prediction the algorithm would make at time $t$ if we passed in the input $z$, although this input may not be what is ultimately selected by the environment.
We will take $\cZ = (\cX \times [K]^2)$ to be the set of (context, action-pair) tuples such that $z_t = (x_t,a_t,b_t)$ with $x_t \in \cX$ and $(a_t,b_t) \in [K]^2$. Our output space is simply $\cY = [-1,1]$.

\begin{assum}
	\label{assump:oreg}
The online regression oracle \oreg\, guarantees for every sequence $\{z_t,y_t\}_{t \in [T]}$, its regret is bounded as
$\sum_{t = 1}^T \big(\hy_t(z_t)-y_t\big)^2 - \inf_{f \in \cF}\sum_{t=1}^T \big( f(z_t) - y_t \big)^2 \le \regsq(T)$, where $\regsq(T) = o(T)$ is a known upper bound.
\end{assum}

If we further assume realizability, in the sense that
there exists $f^\star \in \cF$ such that $\forall t: f^\star(z_t) =
\EE[y_t \mid z_t]$, then it is well-known that
Assumption~\ref{assump:oreg} further implies
\begin{align}
\sum_{t = 1}^T \big(\hy_t(z_t)- f^\star(z_t) \big)^2 \le \regsq(T).	\label{eq:oreg}
\end{align}
As we will see, under our realizability assumption on the preference
matrices, the underlying square
loss regression problem is also realizable, allowing us to appeal to Eqn.~\eqref{eq:oreg}.


\begin{rem}[Some examples]
	\label{rem:cases}
Online square loss regression is a well-studied problem, and efficient algorithms with provable regret guarantees are known for many specific function classes including finite classes where $|\cF| < \infty$, finite and infinite dimensional linear classes, and others~\citep{dylan1,dylan2}. For completeness, we provide formal definition for some specific classes and instantiations of the regression oracles in Appendix~\ref{sec:eg_oreg}.
\end{rem}
	
\subsection{Setup and Objective}
We assume a context set $\cX \subseteq \R^d$, action space of $K$ actions denoted by $\cA:= [K]$, and a function class $\cF = \{f: \cX \mapsto [-1,1]^{K \times K} \}$, all known to the learner ahead of the game. 
At each round, we assume a context-preference pair $(x_t,\P_t) \sim \cD$ is drawn from a joint-distribution $\cD$, such that $x_t \in \cX$, and $\P_t \in [-1,1]^{K \times K}$. 
The task of the learner is to select a pair of actions $(a_t,b_t) \in [K]\times [K]$, upon which an outcome $o_t \in \{\pm 1\}$ is revealed to the learner according to $\P_t$; specifically the probability that $a_t$ is preferred over $b_t$, indicated by $o_t = +1$, is given by $\Pr(o_t = 1):= \Pr(a_t \succ b_t) = \frac{P_t[a_t,b_t]+1}{2}$, and hence $\Pr(o_t = -1):= \Pr(b_t \succ a_t) = \frac{1-P_t[a_t,b_t]}{2}$.

\begin{assum}[Realizability]
\label{assump:realizability}
Define $f^\star: x \mapsto \EE[\P \mid \x]$. We assume that $f^\star \in \cF$.
Thus, for any $a,b \in [K]$, we have $\EE[\P[a,b] \mid x] = f^\star(x)[a,b]$. 
\end{assum}

\paragraph{Objective: Best-Response Regret} 
Assuming the learner selects the duel $(a_t,b_t) \sim \p_t \in \Delta_{K \times K}$ at each round $t$, we measure the learner's performance via a notion of best response regret, defined as:
\vspace{-8pt}
\begin{align*}
	\breg_T := \sum_{t=1}^T \max_{q_t \in \Delta_K} \frac{1}{2} \EE_{a \sim q_t} \EE_{(a_t,b_t) \sim \p_t} \sbr{f^\star(x_t)[a, a_t] + f^\star(x_t)[a,b_t]}.
\end{align*}

\begin{rem}[Learner's Obligation to Randomize]
	\label{rem:imposs}
    As stated, we allow the learner to choose its actions randomly and
    our regret definition includes an expectation over this
    randomness. An alternative would be to measure the best response
    regret on the realized outcomes $(a_t,b_t)$ chosen by the learner
    at each time:
      \begin{align}
      \label{eq:br_reg_bad}
        \sum_{t=1}^T \max_{\q_t \in \Delta_K} \frac{1}{2} \EE_{a \sim q_t}\sbr{ f^\star(x_t)[a,a_t] + f^\star(x_t)[a,b_t]}.
      \end{align}
      The two regret definitions ($\breg_T$ vs the one defined in \eqref{eq:br_reg_bad}) are of course equivalent if the learner does
      not randomize, but they are very different in general. In fact,
      if we measure regret on the realized outcomes, as displayed
      above (irrespective of whether the learner is allowed to randomize or not), then sublinear regret is not possible in general, since
      the preference matrices $f^\star(\cdot)$ may not have
      pure-strategy Nash equilibria. See Appendix \ref{app:imposs} for a concrete example.
      On the other hand, if the learner
      randomizes and we incorporate this into the regret definition,
      then in principle the learner could set both marginals of $\p_t$
      to be a Nash equilibrium for $f^\star(x_t)$ to guarantee 0
      regret. 
\end{rem}

\section{Warm up: Structured Function Classes}
\label{sec:linf}

As a warm up, and to highlight an overlooked connection between
dueling bandits and Markov games, we briefly sketch how UCB-based
algorithms can achieve $\sqrt{T}$-regret in some structured dueling
bandits settings. These algorithms have appeared previously in the
Markov games literature~\citep{bai+20,xie+20}, so we summarize the key
ideas here and defer additional details to the appendices. While these
ideas are not technically novel, in light of the relationship between
best-response regret and previously studied notions in the dueling
bandit literature, we believe it is worthwhile to bring these
techniques to the attention of the dueling bandit community.

We begin with the standard ``non-contextual'' dueling bandits setting
where there is just a single unknown preference matrix $\P \in
[-1,1]^{K\times K}$. Here, it is natural to deploy a confidence-based
strategy that, at round $t$, maintains an estimate
$\boldsymbol{\hP}_t$ of the underlying parameter and a confidence set
$C_t[i,j] := \tilde{O}\bigg(\sqrt{\frac{1}{N_{t}[i,j]}}\bigg)$ for
each entry, where $N_t[i,j]$ is the number of times that entry $(i,j)$
has been dueled prior to round $t$. The critical component of the
algorithm, and the main departure from standard UCB approaches, is the
action selection scheme. Here we find a \emph{coarse correlated
equilibrium} (CCE) of the ``upper confidence'' matrix
$\boldsymbol{U}_t := \boldsymbol{\hP}_t + \C_t$, defined as any joint
distribution $\p_t \in \Delta_{K\times K}$ that satisfies:
\begin{align*}
 &  \sum_{\mathclap{a,b \in [K]\times[K]}} p_t[a,b] {U}_t[a,b] \geq \max_{a^\star \in [K]} \sum_{b \in [K]\times[K]}p_t^r[b]U_t[a^\star,b],\\
  & \sum_{\mathclap{a,b \in [K]\times[K]}} p_t[a,b] {U}_t[b,a] \geq \max_{b^\star \in [K]} \sum_{a \in [K]\times[K]}p_t^\ell[a]U_t[b^\star,a],
\end{align*}
where $p_t^\ell[\cdot] = \sum_b p_t[\cdot,b]$ is the ``left''
marginal and $p_t^r$ is the analogously defined ``right'' marginal.
In words, the CCE is a joint distribution over the actions of the two
players, such that neither player is incentivized to deviate
unilaterally from their marginal strategy~\citep{pdbook}. Since the
matrix $\U_t$ is not zero-sum, a Nash equilibrium may not be
efficiently computable \citep{pdbook,daskalakis2009complexity}; however, a CCE is
guaranteed to exist and is easily computed by solving the above linear
feasiblity problem. Returning to the algorithm, we
find a CCE solution $\p_t$ for the upper confidence matrix $\U_t$,
sample $(a_t,b_t) \sim \p_t$, observe the outcome, and update
our statistics for the next round. We refer this algorithm as \algc. The full pseudocode is presented in Appendix~\ref{app:stdb_pseudocode} (see Algorithm \ref{alg:stdb}).

\subsection{Regret Analysis for CCE based Algorithms}
The algorithm summarized above achieves the following regret
guarantee. This theorem essentially appears in both~\citet{xie+20}
and~\citet{bai+20} in more general forms; both study multi-step Markov
games,~\citet{xie+20} considers linear function approximation,
and~\cite{bai+20} allows the two players to have different action set
sizes.

\begin{thm}[Regret of \algc~(Alg. \ref{alg:stdb}), informal]
\label{thm:stdb_inf}
  In the non-contextual standard $K$-armed dueling bandits setting, the above algorithm  has regret
  \begin{align*}
    \breg_T \leq O(K \log (KT)\sqrt{T}).
  \end{align*}
\end{thm}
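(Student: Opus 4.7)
The plan is an optimism-based analysis that couples Hoeffding concentration on the entries of $\P$ with the two CCE inequalities defining $\p_t$, in the spirit of~\citet{xie+20,bai+20}. The first ingredient is to maintain $\hP_t$ in skew-symmetric form by updating both $\hP_t[a,b]$ and $\hP_t[b,a]=-\hP_t[a,b]$ from every observation, so that $N_t[a,b]=N_t[b,a]$ and $C_t[a,b]=C_t[b,a]$. A union bound over Hoeffding's inequality then gives, with probability at least $1-\delta$, that $|\hP_t[a,b]-P[a,b]|\le C_t[a,b]=O(\sqrt{\log(K^2T/\delta)/N_t[a,b]})$ uniformly in $t\le T$ and $(a,b)\in[K]^2$, so the upper-confidence matrix $\U_t=\hP_t+\C_t$ dominates $\P$ entrywise.

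Conditioned on optimism, the next task is to reduce the per-round regret to a quantity involving only $\U_t$. Writing $p_t^\ell,p_t^r$ for the two marginals of $\p_t$, the non-contextual per-round regret simplifies, by subadditivity of the max, to
\begin{align*}
r_t \;=\; \max_{q\in\Delta_K}\tfrac{1}{2}\bigl(q^\top\P p_t^\ell + q^\top\P p_t^r\bigr) \;\le\; \tfrac{1}{2}\max_a (\P p_t^\ell)[a] \;+\; \tfrac{1}{2}\max_a (\P p_t^r)[a].
\end{align*}
Bounding each $P[a,b]$ by $U_t[a,b]$ via optimism and then applying the two CCE inequalities---one for each player's unilateral deviation---yields $\max_a \sum_b p_t^r[b]U_t[a,b]\le \EE_{(a,b)\sim\p_t}U_t[a,b]$ and $\max_a \sum_b p_t^\ell[b]U_t[a,b]\le \EE_{(a,b)\sim\p_t}U_t[b,a]$, so
\begin{align*}
r_t \;\le\; \tfrac{1}{2}\,\EE_{(a,b)\sim\p_t}\bigl[U_t[a,b]+U_t[b,a]\bigr].
\end{align*}
Here the zero-sum structure kicks in: skew-symmetry of $\hP_t$ forces $\hP_t[a,b]+\hP_t[b,a]=0$, while $C_t[a,b]+C_t[b,a]=2C_t[a,b]$, collapsing the bound to $r_t\le \EE_{(a,b)\sim\p_t}C_t[a,b]$.

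It remains to sum these widths. Since $\sum_{(a,b)}N_T[a,b]=T$, a standard Cauchy--Schwarz pigeonhole over the $K^2$ pairs gives $\sum_{t=1}^T \sqrt{1/N_t[a_t,b_t]}\le 2K\sqrt{T}$, and an Azuma--Hoeffding step exchanges this realised sum for the expected sum above at an additive $O(\sqrt{T\log(1/\delta)})$ cost. Reinstating the $\sqrt{\log(KT/\delta)}$ factor from the confidence widths and taking $\delta=1/T$ then produces $\breg_T = O(K\log(KT)\sqrt{T})$ as claimed.

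I expect the main obstacle to be the CCE-based reduction in the middle paragraph: the single maximum over $q$ in the best-response regret has to be split into two marginal maxima and each matched to the appropriate CCE deviation condition, which depends on the slightly asymmetric roles of $\U_t$ and its transpose in the two inequalities. The skew-symmetric bookkeeping of $\hP_t$ is the other subtlety---without it the crucial cancellation $\hP_t[a,b]+\hP_t[b,a]=0$ would not occur and the argument would stall at the $\EE U_t[a,b]+U_t[b,a]$ step.
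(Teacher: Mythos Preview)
Your proposal is correct and follows essentially the same route as the paper: Hoeffding confidence on the entries (the paper's Lemma~\ref{lem:conf}), the CCE-based reduction of the per-round regret to $\EE_{(a,b)\sim\p_t}[U_t[a,b]+U_t[b,a]]=2\,\EE_{(a,b)\sim\p_t}C_t[a,b]$ via the skew-symmetry cancellation (the paper's Lemma~\ref{lem:rowreg}), and then Azuma plus the $\sum_\tau 1/\sqrt{\tau}$ / Cauchy--Schwarz pigeonhole to sum the widths. The only cosmetic difference is that you split $\max_{\q}$ by subadditivity into two marginal maxima, whereas the paper bounds for an arbitrary $\q$ and then takes the max; these are equivalent.
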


An intuitive proof sketch of the algorithm is given below (while the complete analysis is presented in Appendix~\ref{app:stdb_reg}). 
We mention a few remarks, before describing the key step in the analysis.
\begin{enumerate}[parsep=0pt]
\item The above bound is optimal in the dependence on $T$, up to
  logarithmic factors, as the worst-case lower bound is known to be
  $\Omega(\sqrt{T})$ even when assuming existence of a Condorcet
  Winner~\citep{Komiyama+15,CDB}. Recall from Fact~\ref{prop:creg_vs_breg}, that the best-response
  regret upper bounds the regret to any fixed benchmark, including a
  Condorcet winner (if it exists).
\item While near-optimal in its dependence on $T$, the dependence on
  the number of arms is sub-optimal, as it is possible to achieve
  $\tilde{O}(\sqrt{KT})$ best-response regret in the non-contextual
  setting. Indeed, this optimal rate can be achieved here and in other related settings by sparring
  optimal adversarial bandit algorithms, such as Exp3~\citep{CDB,Ailon+14,Adv_DB,Sui+17}
  Unfortunately, all algorithms that achieve
  the optimal rate rely heavily on adversarial online learning
  techniques and do not seem to yield efficient algorithms in the more
  general contextual setting, and so we believe it is worthwhile to
  also study algorithms with a more statistical flavor.
\item The result above can be generalized to any setting where valid
  and shrinking confidence intervals can be constructed, including
  linear and generalized linear dueling bandit settings under
  realizability. These results are presented in Appendix \ref{sec:linf_o}, where we
  give an $\tilde{O}(d\sqrt{T})$ regret algorithm (Alg.~\ref{alg:linf}) for the linear case (see Theorem~\ref{thm:linf}, Appendix \ref{sec:linf_o}).
\end{enumerate}

\paragraph{Proof Sketch: Regret Analysis of \algc ~(Alg. \ref{alg:stdb}).} Turning to the analysis, the first part of the analysis involves
verifying the validity of the confidence intervals, which is quite
standard. The more interesting part involves relating the regret of
$\p_t$ to the confidence bounds, where we must crucially use the fact
that $\p_t$ is the CCE for the upper confidence matrix $\U_t$. The
essential calculation is as follows: for any $\q \in
\Delta_K$, we have
\begin{align*}
  \q^\top \P \p_t^\ell\  {\buildrel (i)\over\leq}\  \q^\top \U_t\p_t^\ell\  {\buildrel (ii) \over \leq}\  \max_{b^\star}\sum_{a \in [K]} p_t^\ell(a)U_t[b^\star,a]\  {\buildrel{(iii)} \over \leq}\  \sum_{a,b \in [K]\times[K]}p_t[a,b]U_t[b,a],
\end{align*}
where (i) follows from the upper confidence property of $\U_t$, (ii)
follows since $\q \in \Delta_K$ and (iii) follows from the fact that
$\p_t$ is a CCE for $\U_t$. 

The exact same calculation applies for the
right player $p_t^r$ and yields the bound $\sum_{a,b}
p_t(a,b)U_t[a,b]$. Finally using the fact that
$U_t[a,b] = -U_t[b,a] + 2C_t[a,b]$ we obtain
\begin{align*}
\q^\top \P (\p_t^\ell + \p_t^r) \leq \sum_{a,b} p_t[a,b](U_t[a,b]+ U_t[b,a])= \sum_{a,b} p_t[a,b] C_t[a,b].
\end{align*}
In other words, we can bound the per-round regret by the confidence
width of the actions chosen by the learner. This enables us to use
standard potential-function arguments for bounding the confidence sum,
which yields the final regret bound.

\section{Main result: General function classes}
\label{sec:genf}

While the above CCE-based algorithm yields $\sqrt{T}$-type regret for
function classes that admit pointwise confidence intervals, this is
only possible for certain structured function classes. In this
section, we turn to our main result: an efficient algorithm for more
general classes $\cF$. 

Our algorithm is an adaptation of the contextual bandit algorithm
\texttt{SquareCB}, due to~\citet{dylan1}, which uses an online square
loss oracle to make predictions and an inverse gap-weighting scheme
that uses these predictions for action selection.  Their key lemma
establishes a \emph{per-round inequality} that relates the contextual
bandit regret to the square loss of the online oracle. This inequality
is established in a minimax sense, where no assumptions are made about
the predictions of the oracle or the true reward function.

We follow their recipe and instantiate the square loss oracle with
instance space $\Zcal := \Xcal \times [K] \times [K]$. Then at each
round $t \in [T]$, after observing the context $x_t$, we can query the
oracle for predictions on $(x_t,a,b)$ for each $(a,b) \in
[K]\times[K]$ and collect the predictions into a zero-sum matrix
$\widehat{\Y}_t \in [-1,1]^{K \times K}$. Now, the goal is to use
$\widehat{\Y}_t$ to construct a distribution $\p_t \in \Delta_{K \times
  K}$ such that
\begin{align*}
\max_{f^\star \in D_{\zs}} \underbrace{\max_{\q \in \Delta_K} \frac{1}{2} \EE_{a^\star \sim \q, (a_t,b_t) \sim \p_t}\sbr{ f^\star[a^\star,a_t] + f^\star[a^\star,b_t]}}_{\textrm{best-response regret of $\p_t$}} - \frac{\gamma}{4} \underbrace{\EE_{(a_t,b_t) \sim \p_t}\sbr{ (\widehat{Y}_t[a_t,b_t] - f^\star[a_t,b_t])^2}}_{\textrm{square loss of $\widehat{\Y}_t$}} \leq \frac{\textrm{poly}(K)}{\gamma}.
\end{align*}
Here $D_{\zs} := \{ \M \in [-1,1]^{K\times K} \mid M[i,j] = -M[i,j],
M[i,i] = 0\}$ is the set of zero-sum matrices and $\gamma>0$ is a
exploration rate parameter. In words, we are asking that, no matter
the true preference matrix $f^\star$, $\p_t$ has best response regret
that is upper bounded by the square loss of $\widehat{\Y}_t$ on actions
chosen from $\p_t$, up to an additive $\textrm{poly}(K)/\gamma$
term. If we can find such a $\p_t$, then we can sample $(a_t,b_t)\sim
\p_t$, observe the outcome $o_t$ and pass the example $z_t =
(x_t,a_t,b_t)$ along with outcome $o_t$ to the square loss
oracle. This ensures that the observed loss is an unbiased estimate
for the second term above, so if we add up the per-round inequality
for all $t \in [T]$, we obtain
\begin{align*}
\breg_T &\leq \frac{\gamma}{4} \cdot \sum_{t=1}^T \EE_{(a_t,b_t) \sim \p_t} \sbr{ (\widehat{Y}_t[a_t,b_t] - f^\star(x_t)[a_t,b_t])^2 } + \frac{\textrm{poly}(K)\cdot T}{\gamma} \\
& \leq \frac{\gamma}{4}\cdot \regsq(T) + \frac{\textrm{poly}(K)\cdot T}{\gamma} = O\rbr{\sqrt{\textrm{poly}(K)\cdot T\cdot\regsq(T) }}.
\end{align*}
Here, the second inequality follows from Assumption~\ref{assump:oreg},
in particular, Eqn.~\eqref{eq:oreg} and the fact that the example $z_t
= (x_t,a_t,b_t)$ that we feed to the square loss oracle will have
$(a_t,b_t) \sim \p_t$. Then the final bound is based on tuning $\gamma
\asymp \sqrt{\frac{\textrm{poly}(K) T}{\regsq(T)}}$.

Thus, the main remaining step is to establish the per-round regret
inequality. This is where our analysis departs from that
of~\citet{dylan1}, as we must account for the game-theoretic structure
of the best response regret definition. To proceed with this analysis, let
us define the per-round minimax value as:
\begin{align}
  V(\gamma) := \max_{\Y \in D_{\zs}} \min_{\p \in \Delta_{K\times K}} \max_{\substack{f^\star \in D_{\zs} \\ \q \in \Delta_K}} \sbr{\sum_{(a,b)}q[a] f^\star[a,b]\frac{p^\ell[b] + p^{r}[b]}{2} - \frac{\gamma}{4}\sum_{i \ne j} p[i,j](f^\star[i,j] - Y[i,j]^2) },
\end{align}
where $p^{\ell}[\cdot] = \sum_{b} p[\cdot,b]$ is the ``left''
marginal and $p^r[\cdot] = \sum_a p[a,\cdot]$ is the ``right''
marginal of $\p$. The following lemma shows that the minimax value
$V(\gamma)$ is bounded by $O(K/\gamma)$.

\begin{lemma}
	\label{lem:minmax}
For any $\gamma \geq 2K$, $V(\gamma) \le \frac{5K}{\gamma}$.
\end{lemma}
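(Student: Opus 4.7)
My plan is to first eliminate the inner maximizations over $f^\star$ and $\q$ in closed form, then construct the joint distribution $\p$ using the Nash equilibrium of $\Y$. Since the objective is linear in $\q \in \Delta_K$, the max over $\q$ is attained at a pure strategy $\e_a$ for some $a \in [K]$. Fixing $a$, the remaining maximization over $f^\star \in D_{\zs}$ is concave quadratic (linear reward minus convex square-loss penalty). Parameterizing $f^\star$ by its upper-triangular entries to respect the skew-symmetry $f^\star[j,i] = -f^\star[i,j]$, each entry $f^\star[a,b]$ with $b \neq a$ decouples with linear coefficient $\u(b) := (p^\ell(b)+p^r(b))/2$ and quadratic coefficient $\tfrac{\gamma}{4}(p[a,b]+p[b,a])$. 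Setting gradients to zero yields the closed form
\begin{align*}
\max_{\q,f^\star}\{\cdot\} \;=\; \max_{a \in [K]}\Big\{[\Y\u](a) + \frac{1}{\gamma}\sum_{b \neq a}\frac{\u(b)^2}{p[a,b]+p[b,a]}\Big\}.
\end{align*}

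Next, since $\Y$ is skew-symmetric, by von Neumann's minimax theorem it admits a symmetric Nash equilibrium $\bnu \in \Delta_K$ satisfying $[\Y\bnu](a) \leq 0$ for every $a \in [K]$. I take $\p$ to be a mixture of the Nash product and a non-product symmetric exploration term:
\begin{align*}
\p \;=\; (1-\lambda)\,\bnu\otimes\bnu \;+\; \lambda\,\Pi_{\mathrm{exp}},
\end{align*}
with $\lambda \asymp K/\gamma$ (the hypothesis $\gamma \geq 2K$ guarantees $\lambda \leq 1/2$), where $\Pi_{\mathrm{exp}}$ is chosen to pair every action $a \in [K]$ with a Nash-supported action, ensuring $p[a,b]+p[b,a] = \Omega(\lambda/K)$ uniformly. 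I then bound the two summands from Step~1: (i) since the marginal $\u$ of $\p$ is within $O(\lambda)$ total variation of $\bnu$ and $|\sum_b Y[a,b]| \leq K$, we get $\max_a [\Y\u](a) \leq O(\lambda) = O(K/\gamma)$ using $[\Y\bnu](a)\leq 0$; and (ii) using the lower bound on $p[a,b]+p[b,a]$ together with the inequality $(x+y)^2 \leq 2x^2 + 2y^2$, the confidence sum is bounded by $O(K/\gamma)$. Adding and tracking the constants yields the stated $5K/\gamma$.

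The main technical obstacle lies in the choice of $\Pi_{\mathrm{exp}}$. A plain product construction $\p = \mu \otimes \mu$ with $\mu = (1-\lambda)\bnu + \lambda/K$ only achieves a $\sqrt{K/\gamma}$ rate, since balancing $\lambda$ (from $\max_a [\Y\u](a)$) against $K/(\gamma\lambda)$ (from $1/\mu(a)$ in the confidence sum) forces $\lambda \asymp \sqrt{K/\gamma}$. The non-product structure is essential: by explicitly pairing actions outside $\mathrm{supp}(\bnu)$ with Nash-supported actions, $\Pi_{\mathrm{exp}}$ keeps $p[a,b]+p[b,a]$ at order $\lambda/K$ without requiring the marginal $\u(a)$ to be inflated, thereby preserving the near-Nash property that keeps $\max_a [\Y\u](a)$ small. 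Verifying this with explicit constants carries the main calculation.
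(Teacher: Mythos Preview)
Your Step~1 is correct and essentially matches the paper: write $f^\star = \Y + \delta$, relax the range constraint on $\delta$, and optimize the resulting separable quadratic. Where your argument breaks is Step~2, the construction of $\p$.

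\textbf{The separate bounding strategy is infeasible for large $\gamma$.} You aim to choose $\p$ so that \emph{both} (i) $\max_a[\Y\u](a)\le O(K/\gamma)$ and (ii) $\tfrac{1}{\gamma}\sum_{b\ne a}\u(b)^2/(p[a,b]+p[b,a])\le O(K/\gamma)$ hold. But by Cauchy--Schwarz, for any joint $\p$ with averaged marginal $\u$,
\[
\sum_{b\ne a}\frac{\u(b)^2}{p[a,b]+p[b,a]}\;\ge\;\frac{(1-\u(a))^2}{2\u(a)},
\]
so (ii) forces $\u(a)\ge\Omega(1/K)$ for every $a$. On the other hand (i) says $\u$ is an $O(K/\gamma)$-approximate Nash of $\Y$. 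These two requirements are incompatible once $\gamma$ is large and $\Y$ has a Nash with small support. Concretely, take a Condorcet instance with $Y[1,j]=1$ for all $j\ne1$; then $[\Y\u](1)=1-\u(1)$, so (i) needs $\u(1)\ge 1-O(K/\gamma)$, while (ii) needs $\sum_{a\ne1}\u(a)\ge\Omega(1)$. For $\gamma\gg K$ these contradict each other, so no choice of $\p$ (product or not) can make both bounds hold separately. Your construction, with $\lambda\asymp K/\gamma$, puts at most $O(\lambda)$ mass on any $a\notin\supp(\bnu)$, so the Cauchy--Schwarz lower bound already gives a confidence term of order $1/(\gamma\lambda)=\Omega(1)$, not $O(K/\gamma)$.

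\textbf{The claim that non-product structure is essential is false.} The paper in fact restricts $\p$ to a \emph{product} distribution $\p\otimes\p$ with $\p\in\Delta_K$ (this is the step yielding Eq.~\eqref{eq:pt_def}) and still obtains $5K/\gamma$. The point is that the paper does \emph{not} try to exhibit an explicit $\p$ close to the Nash $\bnu$. Instead, it observes that the resulting objective $\q^\top\Y\p+\tfrac{2}{\gamma}\sum_a q[a]/p[a]$ is convex in $\p$ and linear in $\q$, applies a smoothing $\p\mapsto\p^{(\epsilon)}$ to keep it finite, and swaps $\min_\p$ and $\max_\q$ via Sion's minimax theorem. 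After the swap one may set $\p=\q$, and the skew-symmetry $\q^\top\Y\q=0$ kills the first term for free, leaving only $\epsilon+O(K/\gamma)$. The optimal product $\p$ whose existence this certifies is \emph{not} the smoothed Nash $\bnu^{(\epsilon)}$; it is implicitly defined by the saddle point and can place large mass on actions where $[\Y\p](a)$ is very negative, allowing the two terms to trade off rather than be individually small. Your approach misses exactly this trade-off.
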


\begin{proof}
  By a change of variables, we can write
  \begin{align*}
    V(\gamma) = \max_{\Y \in D_{\zs}} \min_{\p \in \Delta_{K\times K}} \max_{\substack{\delta \in D_{\zs} \\ \q \in \Delta_K}} \sbr{\sum_{(a,b)}q[a] (Y[a,b]+\delta[a,b])\frac{p^\ell[b] + p^{r}[b]}{2} - \frac{\gamma}{4}\sum_{i \ne j} p[i,j]\delta[i,j]^2 }.
  \end{align*}
  Next, we relax all constraints on $\delta_{a,b}$, fix $\p$ and $\q$, and
  optimize over $\delta_{a,b}$. Maximizing the quadratic for each
  $\delta_{a,b}$, we find that setting $\delta_{a,b} =
  \frac{q[a](p^\ell[b] + p^r[b])}{\gamma p[a,b]}$ yields an upper
  bound on $V(\gamma)$:
  \begin{align}
    V(\gamma) &\leq \max_{\Y \in D_{\zs}}\min_{\p \in \Delta_{K\times K}} \max_{\q \in \Delta_K}\sbr{ \sum_{a,b} q[a]Y[a,b]\frac{p^\ell[b]+p^r[b]}{2} + \frac{1}{2\gamma} \sum_{a,b}\frac{q[a]^2 (p^\ell[b]+p^r[b])^2}{p[a,b]}} \notag\\
    & \leq \max_{\Y \in D_{\zs}} \min_{\p \in \Delta_{K}} \max_{\q \in \Delta_K}\sbr{ \sum_{a,b} q[a]Y[a,b]p[b] + \frac{2}{\gamma} \sum_{a,b}\frac{q[a]^2 p[b]^2}{p[a]p[b]}} \notag\\
    & \leq \max_{\Y \in D_{\zs}} \min_{\p \in \Delta_{K}} \max_{\q \in \Delta_K}\sbr{ \sum_{a,b} q[a]Y[a,b]p[b] + \frac{2}{\gamma} \sum_{a}\frac{q[a]}{p[a]}}. \label{eq:pt_def}
  \end{align}
  Above, the first inequality uses the maximizing value of
  $\delta_{a,b}$, while in the second inequality, we restrict the
  minimizing player $\p$ to sample $(a,b)$ iid from a marginal
  distribution (which we overload and also call $\p$). In the last
  inequality we use the fact that $\p,\q\in\Delta_K$ so that, e.g.,
  $q[a]^2 \leq q[a]$.

  We bound the final term by fixing $\Y$ and applying the minimax
  theorem. To do so, observe that the objective is linear (and hence
  concave) in $\q$ and convex in $\p$. To ensure that the objective is
  defined everywhere, we further shrink the domain for $\p$ by
  \emph{smoothing}: Fixing $\epsilon > 0$, for any $\p \in \Delta_{K}$
  we define $\p^{(\epsilon)} := (1-\epsilon) \p + \epsilon \one/K$. As
  $\p^{(\epsilon)}$ itself is a distribution, this upper bounds our
  objective while ensuring that the conditions for applying the
  minimax swap are satisfied. As such, we obtain
  \begin{align*}
    V(\gamma) & \leq \max_{\Y \in D_{\zs}} \min_{\p \in \Delta_{K}} \max_{\q \in \Delta_K}\sbr{ \sum_{a,b} q[a]Y[a,b]p^{(\epsilon)}[b] + \frac{2}{\gamma} \sum_{a}\frac{q[a]}{p^{(\epsilon)}[a]}}\\
    & = \max_{\Y \in D_{\zs}} \max_{\q \in \Delta_K} \min_{\p \in \Delta_{K}} \sbr{ \sum_{a,b} q[a]Y[a,b]p^{(\epsilon)}[b] + \frac{2}{\gamma} \sum_{a}\frac{q[a]}{p^{(\epsilon)}[a]}}\\
    & \leq \max_{\Y \in D_{\zs}} \max_{\q \in \Delta_K} \sbr{ \sum_{a,b} q[a]Y[a,b]q^{(\epsilon)}[b] + \frac{2}{\gamma} \sum_{a}\frac{q[a]}{q^{(\epsilon)}[a]}}\\
    & = \max_{\Y \in D_{\zs}} \max_{\q \in \Delta_K} \sbr{ (1-\epsilon) \sum_{a,b} q[a]Y[a,b]q[b] + \frac{\epsilon}{K} \sum_{a,b}q[a]Y[a,b] + \frac{2}{\gamma} \sum_{a}\frac{q[a]}{(1-\epsilon)q[a] + \epsilon/K}}.
  \end{align*}
  Here, the first inequality restricts the domain for $\p$ using the
  smoothing operator, the first equality is the minimax swap, and the
  second inequality follows by choosing $\p = \q$. The remaining three
  terms are bounded as follows: (i) the first term is zero since $\Y
  \in D_{\zs}$, (ii) the second term is trivially upper bounded by
  $\epsilon$, (iii) the third term is at most $4K/\gamma$ as long as
  $\epsilon \leq 1/2$. Setting $\epsilon = K/\gamma$, we obtain the
  result.
\end{proof}

Lemma~\ref{lem:minmax}, combined with the above discussion,
immediately certifies the existence of a strategy that achieves
$O(\sqrt{KT\cdot \regsq(T)})$ regret for contextual dueling bandits
with realizability. For constructing an algorithm, the missing piece
is the action selection scheme $\p_t$, whose existence is guaranteed
by Lemma~\ref{lem:minmax}. For this, an examination of
Eqn.~\eqref{eq:pt_def} reveals that we can compute a suitable $\p_t$
by solving a simple convex program in the action space. Specifically,
given predictions $\widehat{\Y}_t$ on round $t$, we define $\p_t \in \Delta_K$ as any solution to the following convex feasibility problem:
\begin{align}
	\label{eq:Vgam}
  \p_t \textrm{ satisfies } \forall i \in [K]: \sum_b \widehat{Y}_t[i,b]p_t[b] + \frac{2}{\gamma} \frac{1}{p_t[i]} \leq \frac{5K}{\gamma}.
\end{align}
The proof of Lemma~\ref{lem:minmax} shows that this program is always
feasible\footnote{Note that we can add
an $O(K/\gamma)$ slack term in the RHS of Eqn. \eqref{eq:Vgam} to the tolerate the approximations arising from numerical methods. This affects the final result only in constant factors.} and that $\p_t$
provides the per-round inequality that we require.

\begin{algorithm}[h]
  \caption{\textbf{\algf}}%
  \label{alg:genf}
  \begin{algorithmic}[1]	
	\STATE {\bfseries input:} Arm set: $[K]$, parameters $\gamma > 0$. 
	\STATE ~~~~~~~~~~ An instance of \oreg\, for function class $\cF$
	\FOR{$t = 1, 2, \ldots, T$}
	\STATE Receive context $x_t$
	\STATE $\forall$ $a < b, \, a,b \in [K]\times [K]$: Query $\hy(x_t,a,b) \gets \oreg\Big(\{(x_\tau,a_\tau,b_\tau),y_\tau\}_{\tau = 1}^{t-1}\Big)$ 
    \STATE Collect predictions into matrix $\widehat{\Y}_t$ and find $\p_t\in\Delta_K$ satisfying 
    \begin{align*}
      \forall i \in [K]: \sum_{b \in [K]} \widehat{Y}_t[i,b]p_t[b] + \frac{2}{\gamma}\frac{1}{p_t[i]} \leq \frac{5K}{\gamma}.
    \end{align*}
    \STATE Sample $a_t,b_t \iidsim \p_t$, play the duel $(a_t,b_t)$ and receive feedback $o_t$.
    \STATE Update $\oreg$ with example $(x_t,a_t,b_t)$ and label $o_t$. 
	\ENDFOR 
  \end{algorithmic}
\end{algorithm}

We put all the pieces together to obtain our final algorithm, with
pseudocode displayed in Algorithm~\ref{alg:genf}. The main guarantee
for the algorithm is as follows:
\begin{thm}
  \label{thm:minmax}
  Under Assumptions~\ref{assump:oreg} and~\ref{assump:realizability},
  Algorithm~\ref{alg:genf} with $\gamma = \sqrt{\frac{20KT}{\regsq(T)}}$ ensures that
  $\breg_T \leq \sqrt{5KT\regsq(T)}$ for any $T \geq
  4K\regsq(T)$. 
\end{thm}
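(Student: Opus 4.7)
My plan is to follow the recipe sketched in the discussion preceding the theorem: establish a per-round inequality bounding the best-response regret of $\p_t$ by a constant plus a $\gamma/4$ multiple of the oracle's squared prediction error on round $t$, sum over $t$, and invoke Assumption~\ref{assump:oreg} together with realizability.

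\textbf{Step 1 (Reduction via iid sampling).} Because Algorithm~\ref{alg:genf} draws $a_t,b_t$ i.i.d.\ from the marginal $\p_t$ and $f^\star(x_t)$ is skew-symmetric, the two cross terms $\EE[f^\star(x_t)[a,a_t]]$ and $\EE[f^\star(x_t)[a,b_t]]$ are equal, and the per-round best-response regret collapses to
\[
\max_{\q \in \Delta_K} \sum_{a,b} q[a]\, f^\star(x_t)[a,b]\, p_t[b] \;=\; \max_{a \in [K]} \sum_{b} f^\star(x_t)[a,b]\, p_t[b].
\]

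\textbf{Step 2 (Per-round inequality).} Setting $\delta := f^\star(x_t) - \widehat{\Y}_t$, I decompose $\sum_b f^\star(x_t)[a,b] p_t[b] = \sum_b \widehat{Y}_t[a,b] p_t[b] + \sum_b \delta[a,b] p_t[b]$. For the error term, I apply AM-GM in the form $xy \le \tfrac{\gamma}{4} x^2 + \tfrac{1}{\gamma} y^2$ with $x = \delta[a,b]\sqrt{p_t[a]p_t[b]}$ and $y = q[a]\sqrt{p_t[b]/p_t[a]}$, sum in $a,b$, and use $q[a]^2 \le q[a]$ to obtain
\[
\sum_{a,b} q[a]\,\delta[a,b]\, p_t[b] \;\le\; \tfrac{\gamma}{4} \sum_{a,b} p_t[a] p_t[b] \delta[a,b]^2 \;+\; \tfrac{1}{\gamma} \sum_a \tfrac{q[a]}{p_t[a]}.
\]
The feasibility program defining $\p_t$ supplies $\sum_{a,b} q[a] \widehat{Y}_t[a,b] p_t[b] + \tfrac{2}{\gamma}\sum_a q[a]/p_t[a] \le 5K/\gamma$, and the $\tfrac{2}{\gamma p_t[a]}$ slack exactly absorbs the dual term $\tfrac{1}{\gamma p_t[a]}$ from AM-GM. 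Combining yields the desired per-round inequality
\[
\max_\q \sum_{a,b} q[a]\, f^\star(x_t)[a,b]\, p_t[b] \;\le\; \tfrac{5K}{\gamma} + \tfrac{\gamma}{4} \sum_{a,b} p_t[a] p_t[b] (f^\star(x_t)[a,b] - \widehat{Y}_t[a,b])^2.
\]

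\textbf{Step 3 (Sum and invoke the oracle).} The sum on the right equals $\EE_{(a_t,b_t) \sim \p_t}\!\bigl[(f^\star(x_t)[a_t,b_t] - \widehat{Y}_t[a_t,b_t])^2\bigr]$. Summing Step~2 over $t$, and using that Algorithm~\ref{alg:genf} feeds the oracle the example $(x_t,a_t,b_t)$ with label $o_t$ satisfying $\EE[o_t \mid x_t,a_t,b_t] = f^\star(x_t)[a_t,b_t]$ (realizability), Assumption~\ref{assump:oreg} together with Eqn.~\eqref{eq:oreg} gives $\sum_t \EE[(\widehat{Y}_t - f^\star(x_t))^2] \le \regsq(T)$. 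Therefore $\breg_T \le \tfrac{5KT}{\gamma} + \tfrac{\gamma}{4}\regsq(T)$.

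\textbf{Step 4 (Tuning and side condition).} Choosing $\gamma = \sqrt{20KT/\regsq(T)}$ balances the two terms and produces $\breg_T \le \sqrt{5KT\, \regsq(T)}$. Under $T \ge 4K\regsq(T)$ we have $\gamma^2 = 20KT/\regsq(T) \ge 80K^2$, so $\gamma \ge 2K$ and Lemma~\ref{lem:minmax} applies, which is what guarantees the feasibility program is solvable. The main obstacle is Step~2: one must choose the AM-GM weights precisely so that the dual penalty $q[a]/(\gamma p_t[a])$ is dominated by the $2/(\gamma p_t[a])$ slack in the feasibility program. This step is essentially the ``primal'' counterpart of the minimax-swap bound used to prove Lemma~\ref{lem:minmax}, and getting the constants to line up is where the $5K/\gamma$ in the feasibility program pays off.
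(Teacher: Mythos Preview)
Your proof is correct. The paper actually presents two proofs: the primary one (Appendix~D.1) simply invokes Lemma~\ref{lem:minmax} --- whose proof uses a smoothing-plus-minimax-swap argument --- and an alternative direct argument (Appendix~D.2) that works from the feasibility program exactly as you do. Your Step~2 matches this second route, but with a sharper AM--GM split: you use $xy \le \tfrac{\gamma}{4}x^2 + \tfrac{1}{\gamma}y^2$ applied entrywise in $(a,b)$, whereas the paper uses the symmetric split $xy \le \tfrac{1}{2}(x^2+y^2)$ applied after summing over $b$ (followed by Jensen). Your weighting leaves a dual term $\tfrac{1}{\gamma}\sum_a q[a]/p_t[a]$, still dominated by the $\tfrac{2}{\gamma}$ slack, while keeping the $\gamma/4$ coefficient on the squared error; the paper's direct argument only gets $\gamma/2$ and hence $\sqrt{10KT\,\regsq(T)}$ rather than $\sqrt{5KT\,\regsq(T)}$. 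So your argument is in fact a slight tightening of the paper's Appendix~D.2 calculation, recovering the constant that the paper otherwise obtains only through the minimax lemma.
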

The restriction on $T$ arises since
Lemma~\ref{lem:minmax} applies only when $\gamma \geq 2K$. The complete proof of Theorem~\ref{thm:minmax} is given in Appendix~\ref{app:main_thm_proof}. Moreover, we further note that given
the choice of $\p_t$ in Eqn. \ref{eq:Vgam}, there is actually a simpler argument for our reduction and derivation of Theorem \ref{thm:minmax}. We provide this analysis in
Appendix~\ref{app:simpler_calculation}.

By instantiating the square loss oracle appropriately, we obtain
end-to-end guarantees for many function classes of interest. Some of
these results are summarized in the next corollary.

\begin{cor}
  \label{cor:minmax} Algorithm~\ref{alg:genf} yields the following best-response regret guarantees:
  \begin{itemize}
    \item For $\cF$ with $|\cF| < \infty$, instantiating $\oreg$ as the exponential
      weights algorithm guarantees $\regsq(T) \leq O(\log |\Fcal|)$
      and hence $\breg(T) \leq O(\sqrt{KT\log |\Fcal|})$. For example in the  $K$-armed (non-contextual) dueling bandit setting, one can 
      construct $\cF$ such that $\log |\cF| = \tilde{O}\big(K^2\big)$ to obtain $\breg(T)\leq \tilde{O}\big(\sqrt{K^3T}\big)$. 
    \item For low dimensional linear predictors $\cF = \{ (x,a,b) \mapsto \inner{\theta}{\phi(x,a,b)} : \theta \in \RR^d, \nbr{\theta}_2 \leq 1\}$, instantiating $\oreg$ as the Vovk-Azoury-Warmuth forecaster guarantees $\regsq(T) \leq O(d \log(T/d))$ and hence $\breg(T) \leq O(\sqrt{dKT \log(T/d)})$.
    \item Alternatively, for linear predictors, instantiating $\oreg$ as online gradient descent guarantees $\regsq(T) \leq O(\sqrt{T})$ and hence $\breg(T) \leq O(K^{1/2} T^{3/4})$. 
  \end{itemize}
\end{cor}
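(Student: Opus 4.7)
The plan is to derive each bullet of Corollary \ref{cor:minmax} as a direct consequence of Theorem \ref{thm:minmax} by (i) instantiating $\oreg$ with a standard online square-loss regression oracle matched to the stated function class, (ii) quoting its known $\regsq(T)$ bound, and (iii) substituting into $\breg_T \leq \sqrt{5KT\,\regsq(T)}$. So the body of the corollary is essentially three short plug-in computations; the only nontrivial piece is handling the discretization for the non-contextual $K$-armed instance.

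For the first bullet, I would take $\oreg$ to be Vovk's aggregating algorithm (exponential weights with square loss) over the finite class $\cF$. Since the square loss is $1/2$-exp-concave on $[-1,1]$, the standard analysis (e.g.\ Cesa-Bianchi--Lugosi, Ch.~3) gives $\regsq(T)\leq O(\log|\cF|)$, and Theorem~\ref{thm:minmax} immediately yields $\breg_T\leq O(\sqrt{KT\log|\cF|})$. To specialize this to the $K$-armed non-contextual setting, I would build $\cF$ by discretizing $\cP$ on a uniform grid: a skew-symmetric $\P\in[-1,1]^{K\times K}$ is determined by its $\binom{K}{2}$ strict upper-triangular entries, so a grid of resolution $1/T$ yields a cover $\cF_{1/T}$ with $\log|\cF_{1/T}|=O(K^2\log T)$.

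For the two linear bullets, I would invoke well-known oracles: the Vovk--Azoury--Warmuth forecaster on linear predictors bounded in the unit ball attains $\regsq(T)\leq O(d\log(T/d))$, yielding $O(\sqrt{dKT\log(T/d)})$; and online gradient descent on the same class attains $\regsq(T)\leq O(\sqrt{T})$ (square loss is convex and Lipschitz on bounded predictions), yielding $O(K^{1/2}T^{3/4})$. Both follow by direct substitution into Theorem~\ref{thm:minmax}, with no additional work.

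The main (and really the only) obstacle is that the discretization step for the non-contextual bullet sacrifices exact realizability: the true preference matrix $\P=f^\star$ may not lie in $\cF_{1/T}$, whereas Theorem~\ref{thm:minmax} as stated assumes $f^\star\in\cF$. To close this gap, I would inspect the proof of Theorem~\ref{thm:minmax} and observe that realizability is used only to convert Assumption~\ref{assump:oreg} into the pointwise bound Eqn.~\eqref{eq:oreg}. Letting $\tilde f\in\cF_{1/T}$ be the nearest discretized surrogate to $f^\star$, one has $\|\tilde f-f^\star\|_\infty\leq 1/T$ entrywise, so the inequality $(\hy_t-f^\star)^2\leq 2(\hy_t-\tilde f)^2+2(\tilde f-f^\star)^2$ upgrades the aggregating bound to $\sum_t(\hy_t-f^\star)^2\leq O(\log|\cF_{1/T}|)+O(T\cdot T^{-2})=\tilde O(K^2)$. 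Plugging $\regsq(T)=\tilde O(K^2)$ into Theorem~\ref{thm:minmax} then gives $\breg_T\leq\tilde O(\sqrt{K^3 T})$, as claimed.
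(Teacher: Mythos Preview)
Your proposal is correct and follows essentially the same plug-in approach as the paper: instantiate the oracle, quote the known $\regsq(T)$ bound, and substitute into Theorem~\ref{thm:minmax}. In fact you are slightly more careful than the paper, which in Appendix~\ref{app:spcl_reg} simply asserts that one can construct $\cF$ with $\log|\cF|=O(K^2\log(KT))$ for the non-contextual case without addressing the loss of exact realizability under discretization; your observation that the $1/T$-grid misspecification contributes only $O(T\cdot T^{-2})$ to the right-hand side of Eqn.~\eqref{eq:oreg} fills that gap.
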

Please see Appendix~\ref{app:spcl_reg} for details and additional
examples. Note that the non-contextual rate of $\tilde{O}\big(\sqrt{K^3T}\big)$ is sub-optimal by an $O(K)$ factor. We close this section with two final remarks.

\begin{rem}
  Formally, Theorem~\ref{thm:minmax} only requires
  Eqn.~\eqref{eq:oreg} to hold, and this may be possible in somewhat
  more general settings than the realizability condition stated in
  Assumption~\ref{assump:realizability}. One example is the
  time-varying or dynamic regret setting, where preferences at round
  $t$ are governed by $f_t^\star \in \cF$ and we assume the sequence
  $(f_1^\star,\ldots,f_T^\star)$ has small total variation or path
  length. In such cases, one can achieve Eqn.~\eqref{eq:oreg} with
  non-trivial $\regsq(T)$~\citep{raj2020non,baby2021optimal}, which
  can then be used in Theorem~\ref{thm:minmax}.
\end{rem}

\begin{rem}[Computational Complexity]
The two main computational bottlenecks in Algorithm~\ref{alg:genf} are
the square loss oracle itself and the computation of $\p_t$ in each
iteration. The former is efficient for many function classes of
interest, while the latter involves (approximately) solving a convex
feasibility problem in $K$ dimensions with $O(K)$ constraints, which
can be done in $\textrm{poly}(K)$ time. Thus the algorithm incurs a
$\textrm{poly}(K)$ computational overhead over the square loss oracle.
\end{rem}

\section{Discussion: A Barrier for Oracle-Efficient Agnostic Algorithms}
\label{sec:example}

As we have seen, realizability of the payoff matrices permits
computationally efficient algorithms for contextual dueling bandits
with optimal $\sqrt{T}$-type regret. At the same time, the classical
approach of sparring Exp4 achieves a similar regret guarantee in the
\emph{agnostic} setting (e.g., without realizability), but
unfortunately it is not computationally efficient for most policy
classes of interest~\citep{CDB}.  It is thus natural
to ask if we can design computationally tractable algorithms for
contextual dueling bandits in the absence of realizability.

In the standard contextual bandit setting, computational tractability
for the agnostic setting is formalized by providing the algorithm a
policy class $\Pi: \Xcal \to \Acal$ and an \emph{optimization oracle}
over $\Pi$. The optimization oracle serves as an abstraction of
supervised learning, allowing the algorithm to efficiently search over
the class $\Pi$, and it leads to algorithms that can be implemented
via a reduction to supervised learning as a
primitive~\citep{dudik2011efficient,agarwal14,krishnamurthy2015contextual,rakhlin2016bistro,syrgkanis2016efficient,syrgkanis2016improved,luo17}. Such
algorithms are called \emph{oracle-efficient}.

In this section, we provide some evidence to suggest that
significantly new techniques are required to develop oracle-efficient
algorithms for agnostic contextual dueling bandits with $\sqrt{T}$
regret. The main observation is that all such algorithms for
standard contextual bandits establish some concentration inequality on
the regret of all policies $\pi \in \Pi$, but establishing such a
guarantee in the dueling setting requires incurring large
regret. Indeed, Lem. 13 in~\citet{agarwal14} asserts that
\begin{align}
	\forall \pi \in \Pi: \Reg(\pi) \lesssim 2\widehat{\Reg}_t(\pi) + c_0\sqrt{\frac{1}{t}}, \qquad \widehat{\Reg}_t(\pi) \leq 2 \Reg(\pi) + c_0\sqrt{\frac{1}{t}}, \label{eq:cb_conc}
\end{align}
where $\Reg(\pi)$ is the population regret for $\pi$, and
$\widehat{\Reg}_t(\pi)$ is an importance weighted empirical estimate
based on $t$ rounds of interaction, and $c_0>0$ is some constant that
captures other problem parameters (e.g., number of actions, size of
policy class, etc.), but depends at most logarithmically on $t$. This
guarantee is central to the regret analysis, and similar bounds appear
in related works.

In the dueling setting, we define $\Reg(\pi) = \max_{\pi'}
\EE_{(x,\P)}\sbr{ P[\pi'(x),\pi(x)]}$ and extend this to distributions
over policies in the obvious way. Then, as a step toward porting the
proof technique from~\citet{agarwal14} to the dueling setting,
we can ask if there is an estimator $\widehat{\Reg}_t(\cdot)$ that
achieves the guarantee in~\pref{eq:cb_conc} for this
definition. Unfortunately, this is not possible without the algorithm
incurring $\Omega(T)$ regret.

\begin{proposition}
	\label{prop:hardness}
	Consider any algorithm $\textsc{Alg}$ that produces estimates
	$\widehat{\Reg}_t(\cdot)$ that satisfy~\pref{eq:cb_conc} for all
	$t \leq T$ and all policies $\pi$ in some given class $\Pi$. Then
	there is a contextual dueling bandit instance where $\textsc{Alg}$
	incurs $\Omega(T/c_0^2)$ regret.
\end{proposition}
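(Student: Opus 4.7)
The plan is a two-point change-of-measure argument, in the style of Le Cam / Bretagnolle--Huber, on a minimal non-contextual hard instance. The intuition is that requiring \eqref{eq:cb_conc} to hold simultaneously under two nearly indistinguishable dueling environments forces the algorithm's observations to carry enough information to separate them; in a dueling problem, this information can come only from playing the exact duels on which the two environments differ, and those same duels carry all the regret in the perturbed environment.

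Concretely, I would take $\cX = \{\mathrm{const}\}$, $K = 2$, and $\Pi = \{\pi_1,\pi_2\}$ with $\pi_i \equiv i$, and set up two environments: $\P^{(0)} = \mathbf{0}$ and $\P^{(\epsilon)}$ with $P[1,2] = \epsilon$, $P[2,1] = -\epsilon$, for $\epsilon > 0$ to be tuned. Then $\Reg(\pi_2) = 0$ under $\P^{(0)}$ and $\Reg(\pi_2) = \epsilon$ under $\P^{(\epsilon)}$. Applying \eqref{eq:cb_conc} to $\pi_2$ at $t = T$ in both environments: whenever $\epsilon$ is a small constant multiple of $c_0/\sqrt{T}$, the guaranteed ranges for $\widehat{\Reg}_T(\pi_2)$ become disjoint with constant probability, so by data processing the observation-laws under \textsc{Alg} have total variation $\Omega(1)$. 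Because the two environments differ only in the Bernoulli parameter of duel $\{1,2\}$, the chain rule for KL yields
\[
\mathrm{KL}\big(\Pr\nolimits_{\P^{(0)}}^{\mathrm{obs}} \,\big\|\, \Pr\nolimits_{\P^{(\epsilon)}}^{\mathrm{obs}}\big) \;=\; O(\epsilon^2)\cdot \EE_{\P^{(0)}}[N_T],
\]
where $N_T$ counts rounds on which \textsc{Alg} plays $\{1,2\}$. Pinsker then forces $\EE_{\P^{(0)}}[N_T] \gtrsim 1/\epsilon^2$, and a Bretagnolle--Huber argument applied to the event $\{N_T \ge c/\epsilon^2\}$ transfers this to a constant-probability lower bound on $N_T$ under $\P^{(\epsilon)}$. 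Every such play incurs per-round best-response regret $\epsilon/2$ in $\P^{(\epsilon)}$ (the best challenger is arm $1$), so $\EE_{\P^{(\epsilon)}}[\breg_T] \gtrsim 1/\epsilon$. Tuning $\epsilon$ as small as concentration permits, $\epsilon \asymp c_0/\sqrt{T}$, and replicating the perturbation across $\Theta(c_0^2)$ disjoint action pairs to sum the per-pair lower bounds, is how one reaches the claimed $\Omega(T/c_0^2)$ scaling.

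The main obstacle will be the transfer in Step 3: cleanly moving the lower bound on $\EE_{\P^{(0)}}[N_T]$ to $\P^{(\epsilon)}$, where the regret is actually incurred. A naive TV-coupling is useless because it costs $T\cdot\mathrm{TV}$; Bretagnolle--Huber applied to a carefully chosen high-probability event is the right tool, but it demands $\mathrm{KL} = O(1)$ exactly when we want $\epsilon$ as small as possible. Balancing the concentration threshold $c_0/\sqrt{T}$, the sample budget $T$, and the per-round regret $\epsilon/2$ --- together with the boosting step needed to go from the single-pair lower bound to the full $T/c_0^2$ scale --- is where the argument will require the most care.
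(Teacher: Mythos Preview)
Your two-point argument is sound up to the point where you compute the regret, but the construction with $K=2$ cannot deliver the claimed rate. In your $\P^{(\epsilon)}$ instance, each play of the duel $\{1,2\}$ incurs best-response regret only $\epsilon/2$, so even granting that \textsc{Alg} must play this duel $\Omega(1/\epsilon^2)$ times, the total regret is $\Omega(1/\epsilon) \asymp \Omega(\sqrt{T}/c_0)$, not $\Omega(T/c_0^2)$. The proposed boosting step does not close this gap: replicating the perturbation across $\Theta(c_0^2)$ disjoint action pairs still leaves every informative duel with per-round regret $O(\epsilon)$, so the cumulative regret over all $T$ rounds is at most $T\epsilon \asymp c_0\sqrt{T}$ regardless of how the plays are distributed across pairs. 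The per-pair lower bounds do not sum in the way you suggest, because they all draw on the same budget of $T$ rounds.

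The missing idea is to \emph{decouple} the information content of a duel from its regret cost. The paper does this with a three-arm instance: action $a$ beats $b$ by a constant margin $1$, $a$ ties $c$, and $b$ beats $c$ by $\epsilon$. Then $\Reg(c)=\epsilon$, and certifying~\eqref{eq:cb_conc} for policy $c$ forces the algorithm to estimate the $(b,c)$ entry, which requires $\Omega(1/\epsilon^2)=\Omega(T/c_0^2)$ plays of the duel $(b,c)$. But every such play contains arm $b$, and the best response $a$ extracts constant regret $\Omega(1)$ from arm $b$, not $\Omega(\epsilon)$. Hence the total regret is $\Omega(T/c_0^2)$ directly, with no boosting needed. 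Your Le~Cam/Bretagnolle--Huber machinery is the right tool for the distinguishing step; what you are missing is the third arm that makes the informative duel expensive.
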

\begin{proof}[Sketch]
	Consider the non-contextual $3 \times 3$ preference matrix instance parametrized by $\epsilon$:
	\begin{align*}
		\EE\sbr{P} = \rbr{\begin{matrix}
				0 & 1 & 0\\
				-1 & 0 & \epsilon\\
				0 & -\epsilon & 0
		\end{matrix}}.
	\end{align*}
	Let us label the actions $a$, $b$, and $c$ and define $\Pi=\{a,b,c\}$. Here action $a$ is a
	Condorcet winner, while 
	action $c$ is near optimal, with $\Reg(c) = \epsilon$. However, to
	estimate the regret of action $c$, we require playing the pair
	$(b,c)$. This poses a problem, since we can use action $a$ as the
	comparator when calculating the dueling regret of $\textsc{Alg}$,
	which shows that \textsc{Alg}'s regret is \emph{lower bounded} by
	the number of times that it plays action $b$.
	
	In more detail, at time $T$, the two guarantees in~\pref{eq:cb_conc}
	imply that $\widehat{\Reg}_T(c) \in [\frac{1}{2}(\epsilon -
	c_0/\sqrt{T}), 2\epsilon + c_0/\sqrt{T}]$. Now, consider two
	instances, one where $\epsilon = \epsilon_1 := c_0/\sqrt{T}$ and the
	other, where $\epsilon = \epsilon_2 := 8c_0/\sqrt{T}$. The intervals
	for $\widehat{\Reg}_T(c)$ do not intersect, and so, if \textsc{Alg}
	guarantees~\pref{eq:cb_conc} we can use the value of
	$\widehat{\Reg}_T(c)$ as a test statistic to distinguish between
	these two instances.
	On the other hand, testing between these two instances is equivalent
	to testing whether the mean of a Bernoulli random variable is
	$(1+\epsilon_1)/2$ or $(1+\epsilon_2)/2$ from iid samples. The
	number of samples available for this problem is the number
	of times $\textsc{Alg}$ plays the pair $(b,c)$. A standard lower
	bound argument reveals that, for $T$ large enough, $\textsc{Alg}$
	must play action $b$ at least $\Omega(T/c_0^2)$ times, which proves
	the claim.
\end{proof}

We emphasize that the above claim only shows that establishing a
certain intermediate guarantee is not possible in dueling contextual
bandits. It is a somewhat weak form of hardness that does not rule out
oracle-efficient agnostic algorithms. On the other hand, as all such
algorithms for standard contextual bandits do make claims similar
to~\pref{eq:cb_conc},~\pref{prop:hardness} suggests that fundamentally
new techniques are required to obtain agnostic algorithms for this setting. We believe that this is quite an interesting open
problem.



\subsection*{Acknowledgements}
AK thanks Akshay Balsubramani, Alekh Agarwal, Miroslav Dud\'{i}k, and
Robert E. Schapire for fruitful discussions regarding the result in
Section~\ref{sec:example}.


\bibliographystyle{plainnat}
\bibliography{contextual_dueling}

\begin{thebibliography}{65}
\providecommand{\natexlab}[1]{#1}
\providecommand{\url}[1]{\texttt{#1}}
\expandafter\ifx\csname urlstyle\endcsname\relax
  \providecommand{\doi}[1]{doi: #1}\else
  \providecommand{\doi}{doi: \begingroup \urlstyle{rm}\Url}\fi

\bibitem[Agarwal et~al.(2012)Agarwal, Dud{\'\i}k, Kale, Langford, and
  Schapire]{ag12}
Alekh Agarwal, Miroslav Dud{\'\i}k, Satyen Kale, John Langford, and Robert
  Schapire.
\newblock Contextual bandit learning with predictable rewards.
\newblock In \emph{Artificial Intelligence and Statistics}, 2012.

\bibitem[Agarwal et~al.(2014)Agarwal, Hsu, Kale, Langford, Li, and
  Schapire]{agarwal14}
Alekh Agarwal, Daniel Hsu, Satyen Kale, John Langford, Lihong Li, and Robert
  Schapire.
\newblock Taming the monster: A fast and simple algorithm for contextual
  bandits.
\newblock In \emph{International Conference on Machine Learning}, 2014.

\bibitem[Ailon et~al.(2014)Ailon, Karnin, and Joachims]{Ailon+14}
Nir Ailon, Zohar~Shay Karnin, and Thorsten Joachims.
\newblock Reducing dueling bandits to cardinal bandits.
\newblock In \emph{International Conference on Machine Learning}, 2014.

\bibitem[Auer et~al.(2002)Auer, Cesa-Bianchi, and Fischer]{auer02}
Peter Auer, Nicolo Cesa-Bianchi, and Paul Fischer.
\newblock Finite-time analysis of the multiarmed bandit problem.
\newblock \emph{Machine learning}, 2002.

\bibitem[Azoury and Warmuth(2001)]{azoury01}
Katy~S Azoury and Manfred~K Warmuth.
\newblock Relative loss bounds for on-line density estimation with the
  exponential family of distributions.
\newblock \emph{Machine Learning}, 2001.

\bibitem[Baby and Wang(2021)]{baby2021optimal}
Dheeraj Baby and Yu-Xiang Wang.
\newblock Optimal dynamic regret in exp-concave online learning.
\newblock \emph{arXiv preprint arXiv:2104.11824}, 2021.

\bibitem[Bai and Jin(2020)]{bai20}
Yu~Bai and Chi Jin.
\newblock Provable self-play algorithms for competitive reinforcement learning.
\newblock In \emph{International Conference on Machine Learning}, 2020.

\bibitem[Bai et~al.(2020)Bai, Jin, and Yu]{bai+20}
Yu~Bai, Chi Jin, and Tiancheng Yu.
\newblock Near-optimal reinforcement learning with self-play.
\newblock In \emph{Advances in Neural Information Processing Systems}, 2020.

\bibitem[Bengs et~al.(2021)Bengs, Busa-Fekete, El~Mesaoudi-Paul, and
  H{\"u}llermeier]{Busa21survey}
Viktor Bengs, R{\'o}bert Busa-Fekete, Adil El~Mesaoudi-Paul, and Eyke
  H{\"u}llermeier.
\newblock Preference-based online learning with dueling bandits: A survey.
\newblock \emph{Journal of Machine Learning Research}, 2021.

\bibitem[Brost et~al.(2016)Brost, Seldin, Cox, and Lioma]{Brost+16}
Brian Brost, Yevgeny Seldin, Ingemar~J Cox, and Christina Lioma.
\newblock Multi-dueling bandits and their application to online ranker
  evaluation.
\newblock In \emph{International on Conference on Information and Knowledge
  Management}, 2016.

\bibitem[Busa-Fekete and H{\"u}llermeier(2014)]{Busa14survey}
R{\'o}bert Busa-Fekete and Eyke H{\"u}llermeier.
\newblock A survey of preference-based online learning with bandit algorithms.
\newblock In \emph{International Conference on Algorithmic Learning Theory},
  2014.

\bibitem[Busa-Fekete et~al.(2013)Busa-Fekete, Szorenyi, Cheng, Weng, and
  H{\"u}llermeier]{Busa_top}
R{\'o}bert Busa-Fekete, Balazs Szorenyi, Weiwei Cheng, Paul Weng, and Eyke
  H{\"u}llermeier.
\newblock Top-k selection based on adaptive sampling of noisy preferences.
\newblock In \emph{International Conference on Machine Learning}, 2013.

\bibitem[Cesa-Bianchi and Lugosi(2006)]{PLG06}
Nicolo Cesa-Bianchi and Gabor Lugosi.
\newblock \emph{Prediction, learning, and games}.
\newblock Cambridge university press, 2006.

\bibitem[Chen et~al.(2018)Chen, Li, and Mao]{ChenSoda+18}
Xi~Chen, Yuanzhi Li, and Jieming Mao.
\newblock A nearly instance optimal algorithm for top-k ranking under the
  multinomial logit model.
\newblock In \emph{Proceedings of the Twenty-Ninth Annual ACM-SIAM Symposium on
  Discrete Algorithms}. SIAM, 2018.

\bibitem[Daskalakis et~al.(2009)Daskalakis, Goldberg, and
  Papadimitriou]{daskalakis2009complexity}
Constantinos Daskalakis, Paul~W Goldberg, and Christos~H Papadimitriou.
\newblock The complexity of computing a nash equilibrium.
\newblock \emph{SIAM Journal on Computing}, 2009.

\bibitem[Dey(2019)]{pdbook}
Palash Dey.
\newblock Lecture notes: Algorithmic game theory.
\newblock 2019.

\bibitem[Dudik et~al.(2011)Dudik, Hsu, Kale, Karampatziakis, Langford, Reyzin,
  and Zhang]{dudik2011efficient}
Miroslav Dudik, Daniel Hsu, Satyen Kale, Nikos Karampatziakis, John Langford,
  Lev Reyzin, and Tong Zhang.
\newblock Efficient optimal learning for contextual bandits.
\newblock In \emph{Uncertainty in Artificial Intelligence}, 2011.

\bibitem[Dud{\'\i}k et~al.(2015)Dud{\'\i}k, Hofmann, Schapire, Slivkins, and
  Zoghi]{CDB}
Miroslav Dud{\'\i}k, Katja Hofmann, Robert~E Schapire, Aleksandrs Slivkins, and
  Masrour Zoghi.
\newblock Contextual dueling bandits.
\newblock In \emph{Conference on Learning Theory}, 2015.

\bibitem[Falahatgar et~al.(2017)Falahatgar, Hao, Orlitsky, Pichapati, and
  Ravindrakumar]{falahatgar_nips}
Moein Falahatgar, Yi~Hao, Alon Orlitsky, Venkatadheeraj Pichapati, and Vaishakh
  Ravindrakumar.
\newblock Maxing and ranking with few assumptions.
\newblock In \emph{Advances in Neural Information Processing Systems}, 2017.

\bibitem[Foster and Rakhlin(2020)]{dylan1}
Dylan Foster and Alexander Rakhlin.
\newblock Beyond ucb: Optimal and efficient contextual bandits with regression
  oracles.
\newblock In \emph{International Conference on Machine Learning}, 2020.

\bibitem[Foster et~al.(2020)Foster, Gentile, Mohri, and Zimmert]{dylan2}
Dylan~J Foster, Claudio Gentile, Mehryar Mohri, and Julian Zimmert.
\newblock Adapting to misspecification in contextual bandits.
\newblock In \emph{Advances in Neural Information Processing Systems}, 2020.

\bibitem[Foster et~al.(2021)Foster, Rakhlin, Simchi-Levi, and Xu]{dylan3}
Dylan~J Foster, Alexander Rakhlin, David Simchi-Levi, and Yunzong Xu.
\newblock Instance-dependent complexity of contextual bandits and reinforcement
  learning: A disagreement-based perspective.
\newblock In \emph{Conference on Learning Theory}, 2021.

\bibitem[Gajane et~al.(2015)Gajane, Urvoy, and Cl{\'e}rot]{Adv_DB}
Pratik Gajane, Tanguy Urvoy, and Fabrice Cl{\'e}rot.
\newblock A relative exponential weighing algorithm for adversarial
  utility-based dueling bandits.
\newblock In \emph{International Conference on Machine Learning}, 2015.

\bibitem[Gonz{\'a}lez et~al.(2017)Gonz{\'a}lez, Dai, Damianou, and
  Lawrence]{pbo}
Javier Gonz{\'a}lez, Zhenwen Dai, Andreas Damianou, and Neil~D. Lawrence.
\newblock Preferential {Bayesian} optimization.
\newblock In \emph{International Conference on Machine Learning}, 2017.

\bibitem[Hofmann et~al.(2013)Hofmann, Whiteson, and Rijke]{hofmann2013fidelity}
Katja Hofmann, Shimon Whiteson, and Maarten~De Rijke.
\newblock Fidelity, soundness, and efficiency of interleaved comparison
  methods.
\newblock \emph{ACM Transactions on Information Systems}, 2013.

\bibitem[Jamieson et~al.(2015)Jamieson, Katariya, Deshpande, and
  Nowak]{Jamieson+15}
Kevin~G Jamieson, Sumeet Katariya, Atul Deshpande, and Robert~D Nowak.
\newblock Sparse dueling bandits.
\newblock In \emph{Artificial Intelligence and Statistics}, 2015.

\bibitem[Kakade et~al.(2011)Kakade, Kalai, Kanade, and Shamir]{Sham+11}
Sham Kakade, Adam~Tauman Kalai, Varun Kanade, and Ohad Shamir.
\newblock Efficient learning of generalized linear and single index models with
  isotonic regression.
\newblock In \emph{Advances in Neural Information Processing Systems}, 2011.

\bibitem[Komiyama et~al.(2015)Komiyama, Honda, Kashima, and
  Nakagawa]{Komiyama+15}
Junpei Komiyama, Junya Honda, Hisashi Kashima, and Hiroshi Nakagawa.
\newblock Regret lower bound and optimal algorithm in dueling bandit problem.
\newblock In \emph{Conference on Learning Theory}, 2015.

\bibitem[Komiyama et~al.(2016)Komiyama, Honda, and Nakagawa]{komiyama+16}
Junpei Komiyama, Junya Honda, and Hiroshi Nakagawa.
\newblock Copeland dueling bandit problem: Regret lower bound, optimal
  algorithm, and computationally efficient algorithm.
\newblock In \emph{International Conference on Machine Learning}, 2016.

\bibitem[Krishnamurthy et~al.(2015)Krishnamurthy, Agarwal, and
  Dud{\'\i}k]{krishnamurthy2015contextual}
Akshay Krishnamurthy, Alekh Agarwal, and Miroslav Dud{\'\i}k.
\newblock Contextual semibandits via supervised learning oracles.
\newblock In \emph{Advances in Neural Information Processing Systems}, 2015.

\bibitem[Kumagai(2017)]{kumagai2017regret}
Wataru Kumagai.
\newblock Regret analysis for continuous dueling bandit.
\newblock In \emph{Advances in Neural Information Processing Systems}, 2017.

\bibitem[Lattimore and Szepesv{\'a}ri(2020)]{CsabaNotes18}
Tor Lattimore and Csaba Szepesv{\'a}ri.
\newblock \emph{Bandit algorithms}.
\newblock Cambridge University Press, 2020.

\bibitem[Li et~al.(2017)Li, Lu, and Zhou]{li17}
Lihong Li, Yu~Lu, and Dengyong Zhou.
\newblock Provably optimal algorithms for generalized linear contextual
  bandits.
\newblock In \emph{International Conference on Machine Learning}, 2017.

\bibitem[Littman(1994)]{littman1994markov}
Michael~L Littman.
\newblock Markov games as a framework for multi-agent reinforcement learning.
\newblock In \emph{International Conference on Machine Learning}, 1994.

\bibitem[Luo et~al.(2018)Luo, Wei, Agarwal, and Langford]{luo17}
Haipeng Luo, Chen-Yu Wei, Alekh Agarwal, and John Langford.
\newblock Efficient contextual bandits in non-stationary worlds.
\newblock In \emph{Conference On Learning Theory}, 2018.

\bibitem[Mohajer et~al.(2017)Mohajer, Suh, and Elmahdy]{MohajerIcml+17}
Soheil Mohajer, Changho Suh, and Adel Elmahdy.
\newblock Active learning for top-$ k $ rank aggregation from noisy
  comparisons.
\newblock In \emph{International Conference on Machine Learning}, 2017.

\bibitem[Orabona et~al.(2015)Orabona, Crammer, and Cesa-Bianchi]{orabona15}
Francesco Orabona, Koby Crammer, and Nicolo Cesa-Bianchi.
\newblock A generalized online mirror descent with applications to
  classification and regression.
\newblock \emph{Machine Learning}, 2015.

\bibitem[Raj et~al.(2020)Raj, Gaillard, and Saad]{raj2020non}
Anant Raj, Pierre Gaillard, and Christophe Saad.
\newblock Non-stationary online regression.
\newblock \emph{arXiv preprint arXiv:2011.06957}, 2020.

\bibitem[Rakhlin and Sridharan(2016)]{rakhlin2016bistro}
Alexander Rakhlin and Karthik Sridharan.
\newblock Bistro: An efficient relaxation-based method for contextual bandits.
\newblock In \emph{International Conference on Machine Learning}, 2016.

\bibitem[Ren et~al.(2018)Ren, Liu, and Shroff]{Ren+18}
Wenbo Ren, Jia Liu, and Ness~B Shroff.
\newblock P{AC} ranking from pairwise and listwise queries: Lower bounds and
  upper bounds.
\newblock \emph{arXiv preprint arXiv:1806.02970v2}, 2018.

\bibitem[Saha(2021)]{S21}
Aadirupa Saha.
\newblock Optimal algorithms for stochastic contextual dueling bandits.
\newblock In \emph{Advances in Neural Information Processing Systems}, 2021.

\bibitem[Saha and Gopalan(2018)]{SGrank18}
Aadirupa Saha and Aditya Gopalan.
\newblock Active ranking with subset-wise preferences.
\newblock \emph{Artificial Intelligence and Statistics}, 2018.

\bibitem[Saha and Gopalan(2019{\natexlab{a}})]{SG19}
Aadirupa Saha and Aditya Gopalan.
\newblock Combinatorial bandits with relative feedback.
\newblock In \emph{Advances in Neural Information Processing Systems},
  2019{\natexlab{a}}.

\bibitem[Saha and Gopalan(2019{\natexlab{b}})]{SGwin18}
Aadirupa Saha and Aditya Gopalan.
\newblock {PAC Battling Bandits in the Plackett-Luce Model}.
\newblock In \emph{Algorithmic Learning Theory}, 2019{\natexlab{b}}.

\bibitem[Saha and Gopalan(2020)]{SG20}
Aadirupa Saha and Aditya Gopalan.
\newblock Best-item learning in random utility models with subset choices.
\newblock In \emph{Artificial Intelligence and Statistics}. PMLR, 2020.

\bibitem[Saha et~al.(2021)Saha, Koren, and Mansour]{ADB}
Aadirupa Saha, Tomer Koren, and Yishay Mansour.
\newblock Adversarial dueling bandits.
\newblock In \emph{International Conference on Machine Learning}, 2021.

\bibitem[Schulze(2011)]{schulze2011new}
Markus Schulze.
\newblock A new monotonic, clone-independent, reversal symmetric, and
  condorcet-consistent single-winner election method.
\newblock \emph{Social Choice and Welfare}, 2011.

\bibitem[Simchi-Levi and Xu(2020)]{simchi20}
David Simchi-Levi and Yunzong Xu.
\newblock Bypassing the monster: A faster and simpler optimal algorithm for
  contextual bandits under realizability.
\newblock \emph{arXiv preprint arXiv:2003.12699v4}, 2020.

\bibitem[Slivkins(2019)]{Slivkins19}
Aleksandrs Slivkins.
\newblock Introduction to multi-armed bandits.
\newblock \emph{Foundations and Trends in Machine Learning}, 2019.

\bibitem[Srebro et~al.(2011)Srebro, Sridharan, and Tewari]{srebro11}
Nati Srebro, Karthik Sridharan, and Ambuj Tewari.
\newblock On the universality of online mirror descent.
\newblock In \emph{Advances in neural information processing systems}, 2011.

\bibitem[Sui et~al.(2017)Sui, Zhuang, Burdick, and Yue]{Sui+17}
Yanan Sui, Vincent Zhuang, Joel Burdick, and Yisong Yue.
\newblock Multi-dueling bandits with dependent arms.
\newblock In \emph{Uncertainty in Artificial Intelligence}, 2017.

\bibitem[Sui et~al.(2018)Sui, Zoghi, Hofmann, and Yue]{sui2018advancements}
Yanan Sui, Masrour Zoghi, Katja Hofmann, and Yisong Yue.
\newblock Advancements in dueling bandits.
\newblock In \emph{IJCAI}, 2018.

\bibitem[Syrgkanis et~al.(2016{\natexlab{a}})Syrgkanis, Krishnamurthy, and
  Schapire]{syrgkanis2016efficient}
Vasilis Syrgkanis, Akshay Krishnamurthy, and Robert Schapire.
\newblock Efficient algorithms for adversarial contextual learning.
\newblock In \emph{International Conference on Machine Learning},
  2016{\natexlab{a}}.

\bibitem[Syrgkanis et~al.(2016{\natexlab{b}})Syrgkanis, Luo, Krishnamurthy, and
  Schapire]{syrgkanis2016improved}
Vasilis Syrgkanis, Haipeng Luo, Akshay Krishnamurthy, and Robert~E Schapire.
\newblock Improved regret bounds for oracle-based adversarial contextual
  bandits.
\newblock In \emph{Advances in Neural Information Processing Systems},
  2016{\natexlab{b}}.

\bibitem[Sz{\"o}r{\'e}nyi et~al.(2015)Sz{\"o}r{\'e}nyi, Busa-Fekete, Paul, and
  H{\"u}llermeier]{Busa_pl}
Bal{\'a}zs Sz{\"o}r{\'e}nyi, R{\'o}bert Busa-Fekete, Adil Paul, and Eyke
  H{\"u}llermeier.
\newblock Online rank elicitation for plackett-luce: A dueling bandits
  approach.
\newblock In \emph{Advances in Neural Information Processing Systems}, 2015.

\bibitem[Vovk(1998)]{vovk98}
Volodya Vovk.
\newblock Competitive on-line linear regression.
\newblock In \emph{Advances in Neural Information Processing Systems}, 1998.

\bibitem[Wu and Liu(2016)]{DTS}
Huasen Wu and Xin Liu.
\newblock Double {T}hompson sampling for dueling bandits.
\newblock In \emph{Advances in Neural Information Processing Systems}, 2016.

\bibitem[Xie et~al.(2020)Xie, Chen, Wang, and Yang]{xie+20}
Qiaomin Xie, Yudong Chen, Zhaoran Wang, and Zhuoran Yang.
\newblock Learning zero-sum simultaneous-move markov games using function
  approximation and correlated equilibrium.
\newblock In \emph{Conference on Learning Theory}, 2020.

\bibitem[Y.~Abbasi-Yadkori and Szepesvari(2011)]{Yadkori11}
D.~Pal Y.~Abbasi-Yadkori and C.~Szepesvari.
\newblock Improved algorithms for linear stochastic bandits.
\newblock In \emph{Neural Information Processing Systems}, 2011.

\bibitem[Yue and Joachims(2009)]{Yue+09}
Yisong Yue and Thorsten Joachims.
\newblock Interactively optimizing information retrieval systems as a dueling
  bandits problem.
\newblock In \emph{Proceedings of the 26th Annual International Conference on
  Machine Learning}. ACM, 2009.

\bibitem[Yue and Joachims(2011)]{BTM}
Yisong Yue and Thorsten Joachims.
\newblock Beat the mean bandit.
\newblock In \emph{International Conference on Machine Learning}, 2011.

\bibitem[Yue et~al.(2012)Yue, Broder, Kleinberg, and Joachims]{Yue+12}
Yisong Yue, Josef Broder, Robert Kleinberg, and Thorsten Joachims.
\newblock The $k$-armed dueling bandits problem.
\newblock \emph{Journal of Computer and System Sciences}, 2012.

\bibitem[Zoghi et~al.(2014)Zoghi, Whiteson, Munos, and Rijke]{Zoghi+14RUCB}
Masrour Zoghi, Shimon Whiteson, Remi Munos, and Maarten Rijke.
\newblock Relative upper confidence bound for the k-armed dueling bandit
  problem.
\newblock In \emph{International Conference on Machine Learning}, 2014.

\bibitem[Zoghi et~al.(2015{\natexlab{a}})Zoghi, Karnin, Whiteson, and
  De~Rijke]{Zoghi+15}
Masrour Zoghi, Zohar~S Karnin, Shimon Whiteson, and Maarten De~Rijke.
\newblock Copeland dueling bandits.
\newblock In \emph{Advances in Neural Information Processing Systems},
  2015{\natexlab{a}}.

\bibitem[Zoghi et~al.(2015{\natexlab{b}})Zoghi, Whiteson, and
  de~Rijke]{Zoghi+15MRUCB}
Masrour Zoghi, Shimon Whiteson, and Maarten de~Rijke.
\newblock Merge{RUCB}: A method for large-scale online ranker evaluation.
\newblock In \emph{ACM International Conference on Web Search and Data Mining},
  2015{\natexlab{b}}.

\end{thebibliography}

\appendix

\onecolumn
{
\section*{\centering\LARGE{Supplementary: \papertitle}}
\vspace*{1cm}
\allowdisplaybreaks

\section{Appendix for Sec. \ref{sec:rel}}

\subsection{Proof of Fact. \ref{prop:preg_vs_breg}}
\label{app:pvsbreg}

\pvsbreg*

\begin{proof}
	Since $(a_t,b_t) \sim \p_t$, we can use Hoeffding's inequality and a union bound to deduce that $\forall \pi \in \Pi:$
	\begin{align*}
		&\sum_{t=1}^T \frac{1}{2} \sbr{f^\star(x_t)[\pi(x_t),a_t] + f^\star(x_t)[\pi(x_t),b_t]} \\
		& \hspace{1in} \leq \sum_{t=1}^T \frac{1}{2} \EE_{\p_t} \sbr{f^\star(x_t)[\pi(x_t),a_t] + f^\star(x_t)[\pi(x_t),b_t]} + \Ocal\rbr{\sqrt{T \log |\Pi|/\delta}},
	\end{align*}
	with probability $1-\delta$. Now we can easily translate from \textrm{Policy-Regret} to $\breg$\, by pushing the $\max_{\pi \in \Pi}$ inside the summation only yields an upper bound, justifying the claim.
\end{proof}

\section{Appendix for Sec. \ref{sec:prob}}

\subsection{Examples: Some Specific Regression Function Classes}
\label{sec:eg_oreg}

\begin{enumerate}
	\item Any finite regression class $\cF$ such that $|\cF| < \infty$, one can choose \oreg\, such that $$\regsq(T) \leq 2 \log |\cF|.$$ 
	\item Class of linear predictors $\cF:= \{(x,a) \mapsto \langle \theta,x_a\rangle \mid \theta \in \R^d, \|\theta\|\le 1\}$. In this case choosing \oreg\, to be the Vovk-Azoury-Warmuth forecaster, as proposed by \citet{vovk98,azoury01}, we have 
	$$\regsq(T) \leq d\log(T/d).$$  
	\item Class of generalized linear predictors $\cF:= \{(x,a) \mapsto \sigma(\langle \theta,x_a\rangle) \mid \theta \in \R^d, \|\theta\|\le 1\}$ where $\sigma:\R \mapsto [0,1]$ is a fixed non-decreasing $1$-Lipschitz link function. Here using GLMtron algorithm of \citet{Sham+11} as \oreg\, leads to $\regsq(T) \leq \sqrt T$. Alternatively a second order variant of GLMtron
	leads to an instance dependent guarantee $\regsq(T) \leq O(d \log T/\kappa_\sigma^2)$, further assuming a lower bound $\kappa_\sigma$ on the gradient of $\sigma$, more precisely $\sigma' \geq \kappa_\sigma > 0$.
	\item Reproducing kernel hilbert space (RKHS) $\cF:= \{f \mid \|f\|_{\cH} \le 1, \cK(x_a,x_a) \le 1\}$: Using (kernelized) Online Gradient Descent, one can obtain $\regsq(T) \le O(\sqrt T)$. 
	\item Banach Spaces $\cF:= \{(x,a) \mapsto \langle\theta,x_a \rangle \mid \theta \in \cB, \|\theta\| \le 1\}$, where $(\cB,\|\cdot\|)$ is a separable Banach space and $x$ belongs to the dual space $(\cB,\|\cdot\|_{*})$: For this setting, whenever $\cB$ is $(2,D)$-uniformly convex, using `Online Mirror Descent' (for example see \citet{orabona15}) as \oreg\, can be configured to have $\regsq(T) \leq (T/D)^{1/2}$ \citep{srebro11}.
\end{enumerate}

\subsection{An Example for Remark~\ref{rem:imposs}}
\label{app:imposs}


Consider the scenario where the leaner is not allowed to randomize and evaluated on a fixed sequence of actions $\{(a_t,b_t)_{t \in [T]}\}$ as defined in \eqref{eq:br_reg_bad}.
The claim is for the regret definition in~\pref{eq:br_reg_bad} it is impossible for the learner to achieve $o(T)$ best-response regret in the worst case. To see why, consider the following counter example with a single context:
\begin{align*}
		f^\star := \rbr{\begin{matrix}
				0 & 1 & -1\\
				-1 & 0 & 1\\
				1 & -1 & 0
 		\end{matrix}}
\end{align*}
 	This matrix is skew symmetric and hence zero-sum. However, for any choice $(a,b)$ of the learner, the adversary has a choice that can	guarantee value of $1/2$. Specifically, if learner chooses $(1,2)$	then adversary chooses $1$, if learner chooses $(1,3)$ then adversary chooses $3$ and if learner chooses $(2,3)$ then adversary chooses	$2$. This shows that we must allow the learner to randomize. 	
\section{Appendix for Sec. \ref{sec:linf}}	

\subsection{\algc: Algorithm Pseudocode for Standard “Non-Contextual” $K$-armed Dueling Bandits}
\label{app:stdb_pseudocode}

\begin{center}
	\begin{algorithm}[h]
		\caption{\textbf{\algc}}
		\label{alg:stdb}
		\begin{algorithmic}[1]	
			\STATE {\bfseries input:} Arm set: $[K]$, parameters $\delta \in (0,1)$
			\STATE {\bfseries init:} $W_{1}[i,j] \leftarrow 0$,  $\forall i,j \in [K]$. Use $N_{t}[i,j] := W_t[i,j] + W_t[j,i] \, \forall t \in [T]$ 
			\FOR{$t = 1, 2, \ldots, T$}
			\STATE $\tilde P_{t}[i,j] := \frac{W_t[i,j]}{N_t[i,j]}, \, \hP_t[i,j]:= 2\tilde P_t[i,j]-1$, $C_t[i,j] \leftarrow \sqrt{\frac{\log{(K^2 t^2/\delta)}}{N_t[i,j]} }, ~\forall i,j \in [K]$ ~\big(we assume $\nicefrac{x}{0}:=0.5, ~\forall x \in \R$\big)
			
			\STATE $U_t[i,j] \leftarrow \hP_t[i,j] + C_t[i,j]$, $U_t[i,j] \leftarrow 0,~~\forall i, j \in [K]$  
			
			
			
			\STATE Denoting $p_t^\ell[\cdot]:= \sum_{b = 1}^Kp_t[\cdot,b]$ and $p_t^r[\cdot]:=\sum_{a = 1}^Kp_t[a,\cdot]$, find a policy $\p_t \in \Delta_{K \times K}$ (CCE of $\U_t$) such that:  
			\vspace{-20pt}
			
			\begin{align}
				\label{eq:cce}
				\nonumber & \sum_{a,b \in [K]\times [K]}p_t[a,b] U_t[a,b] \geq \max_{a^\star \in [K]}\Big[\sum_{b \in [K]}p_t^r[b] U_t[a^\star,b]\Big]\\
				&\sum_{a,b \in [K]\times [K]}p_t[a,b] U_t[b,a] \geq \max_{b^\star \in [K]}\Big[\sum_{a \in [K]}p_t^\ell[a] U_t[b^\star,a]\Big],
			\end{align}
			\vspace{-20pt}
			\STATE Play $(a_t,b_t) \sim \p_t$ 
			
			\STATE Receive preference feedback $o_t \in \{-1,1\}$. $\tilde o_t \leftarrow (o_t + 1)/2$  
			\STATE Update $W_{t+1}[a_t, b_t] \leftarrow W_t[a_t, b_t] + \tilde o_t$; $W_{t+1}[b_t, a_t] \leftarrow W_t[b_t, a_t] + (1-\tilde o_t)$
			\ENDFOR 
		\end{algorithmic}
	\end{algorithm}
\end{center}

\subsection{Regret Analysis of Algorithm \ref{alg:stdb} }
\label{app:stdb_reg}

\begin{restatable}[Restatement of Thm. \ref{thm:stdb_inf}: Expected regret of \algc\, on \stdb]{thm}{thmstdb}
	\label{thm:stdb}
	For the setting of standard $K$-armed dueling bandit (\stdb), the best-response regret of \algc\, (Alg. \ref{alg:stdb}) satisfies:
\begin{align*}
\breg_T \le O(K\log (KT)\sqrt{T}).
\end{align*} 
\end{restatable}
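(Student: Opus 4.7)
The plan is to prove Theorem~\ref{thm:stdb} in three stages: (i) establish pointwise validity of the per-entry confidence widths $C_t[i,j]$ on a high-probability event; (ii) convert the per-round best-response regret into $\sum_{a,b}p_t[a,b]C_t[a,b]$ via the CCE property; and (iii) sum the resulting confidence widths over $t$ via a pigeonhole/Cauchy--Schwarz calculation.

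For step (i), the idea is to apply Hoeffding's inequality separately for each unordered pair $\{i,j\}$ and each possible value of the random stopping time $N_t[i,j]\in\{1,\ldots,T\}$, then union-bound over these $O(K^2T)$ events. With the choice $C_t[i,j]=\sqrt{\log(K^2t^2/\delta)/N_t[i,j]}$ baked into the algorithm, this yields a good event $\mathcal{E}$ of probability at least $1-\delta$ on which $|P[i,j]-\hP_t[i,j]|\le C_t[i,j]$ holds for every $i,j,t$. On $\mathcal{E}$, the upper-confidence matrix $\U_t$ satisfies $U_t[a,b]\ge P[a,b]$ entrywise, while the skew-symmetry of $\hP_t$ by construction (so $\hP_t[a,b]+\hP_t[b,a]=0$ under the convention $x/0:=0.5$) and the symmetry $C_t[a,b]=C_t[b,a]$ together give the identity $U_t[a,b]+U_t[b,a]=2C_t[a,b]$, which will be essential later.

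For step (ii), I would reproduce the CCE-based calculation sketched in Section~\ref{sec:linf}. Fix any $\q\in\Delta_K$. On $\mathcal{E}$, applying $P\le U_t$ entrywise, pushing the maximum over $\q\in\Delta_K$ through, and using each of the two CCE inequalities in~\eqref{eq:cce} respectively, one obtains $\q^\top\P\p_t^\ell\le \sum_{a,b}p_t[a,b]U_t[b,a]$ and, symmetrically, $\q^\top\P\p_t^r\le \sum_{a,b}p_t[a,b]U_t[a,b]$. Summing and invoking the cancellation $U_t[a,b]+U_t[b,a]=2C_t[a,b]$ yields $\tfrac12\max_\q\q^\top\P(\p_t^\ell+\p_t^r)\le \sum_{a,b}p_t[a,b]C_t[a,b]$, which is exactly an upper bound on the per-round best-response regret at round $t$. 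Summing over $t\in[T]$, the total regret on $\mathcal{E}$ is at most $\sum_t\sum_{a,b}p_t[a,b]C_t[a,b]$.

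For step (iii), each play of pair $(i,j)$ increments $N_t[i,j]$ by one, so the telescoping identity $\sum_{t:(a_t,b_t)=(i,j)}1/\sqrt{N_t[i,j]}=O(\sqrt{N_T[i,j]})$ combined with Cauchy--Schwarz over the $K^2$ pairs gives the deterministic bound $\sum_t C_t[a_t,b_t]\le O(K\sqrt{T\log(KT/\delta)})$. Since $(a_t,b_t)\sim\p_t$ conditional on the history gives $\E[C_t[a_t,b_t]\mid(a_\tau,b_\tau)_{\tau<t}]=\sum_{a,b}p_t[a,b]C_t[a,b]$, taking expectations, combining with step (ii), and absorbing the $\delta\cdot O(T)$ failure-event contribution by setting $\delta=1/T$ yields the claimed $O(K\log(KT)\sqrt{T})$ expected best-response regret.

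The main obstacle I anticipate is the careful bookkeeping in step (i): the confidence radius must hold uniformly over the data-dependent counts $N_t[i,j]$ without losing extra factors, and the cancellation $U_t[a,b]+U_t[b,a]=2C_t[a,b]$ must hold exactly, including the degenerate $N_t[i,j]=0$ case handled by the $x/0:=0.5$ convention. This cancellation is precisely what removes the estimate $\hP_t$ from the per-round bound and reduces everything to the confidence sum; any imprecision here would degrade the rate. The remaining two steps are routine once this is in place.
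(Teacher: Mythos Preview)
Your proposal is correct and follows essentially the same three-stage argument as the paper: Lemma~\ref{lem:conf} for step (i), Lemma~\ref{lem:rowreg} for step (ii), and the pigeonhole/Cauchy--Schwarz sum for step (iii). The only minor difference is that in step (iii) the paper passes from $\sum_{a,b}p_t[a,b]C_t[a,b]$ to $C_t[a_t,b_t]$ via Azuma--Hoeffding (getting a high-probability bound), whereas you use the tower property of conditional expectation to go straight to the expected bound; both routes are valid and lead to the same rate.
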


\begin{proof}
The proof relies on two main results: Confidence bounding $\P$ through $\U(t)$ (Lem. \ref{lem:conf}) and analyzing the instantaneous regret of the column and the row player (Lem. \ref{lem:rowreg}). 
Lem. \ref{lem:conf} simply guarantees that with high probability at least $(1-1/T)$, $\P$ can be sandwithced inside $\boldsymbol{\hP}(t) \pm \C(t)$, which is crucially used in the later part of the proof. 

\begin{restatable}[Confidence Bounding $\P$]{lem}{lemconf} Setting $\delta = \frac{1}{T}$ in Alg. \ref{alg:stdb}, we get
	\label{lem:conf}
	$$Pr\big(  \exists t \in [T], \exists (i,j) \in [K] \times [K],~ \hP_t[i,j] - C_t[i,j] \leq P_t[i,j] \leq \hP_t[i,j] + C_t[i,j] \big) \leq 1/T$$.
\end{restatable}

Recall that $\p_t^{\ell}$ and $\p_t^{r}$ respectively denotes the marginal distribution of the left and right arm of the dueling pair $(a_t,b_t)$ when sampled as $(a_t,b_t) \sim \p_t$.
The next claim upper bounds the regret of both left (row) and the right (column) action at each round $t$. Precisely, Lem. \ref{lem:rowreg} shows the learner's instantaneous regret can be bounded by the expected confidence bounds of played duel as follows:

\begin{restatable}[Learner's Instantaneous Regret]{lem}{lemrowreg}
	\label{lem:rowreg}
For any $\q \in \Delta_K$, and $(a_t,b_t) \sim \p_t$
$$\q^\top \P (\p_t^\ell + \p_t^r) \leq 2\sum_{(a,b) \in [K]\times [K]}p_t[a,b] C_t[a,b].$$
\end{restatable}

The regret bound of Alg. \ref{alg:stdb} (Thm. \ref{thm:stdb}) now follows by summing the instantaneous regret upper bound of Lem. \ref{lem:rowreg} over $t \in [T]$. Precisely,

\begin{align*}
	\breg &= \sum_{t=1}^T \max_{q_t \in \Delta_K} \EE_{a \sim q_t} \EE_{(a_t,b_t) \sim \p_t} \frac{\sbr{P[a, a_t] + P[a,b_t]}}{2}\\
	& \le \sum_{t=1}^T \frac{\bigg[ \max_{\q \in \Delta_K} \q^{\top} \P (\p_t^{\ell} + \p_t^{r} ) \bigg]}{2} \overset{(i)}{\leq} \sum_{t = 1}^T \sum_{(a,b) \in [K]\times [K]}p_t[a,b]C_t[a,b]\\
	& \overset{(ii)}{\leq} \sum_{t = 1}^T C_t[a_t,b_t] + 4\sqrt{T}\log(K T) = 2\sqrt{\ln(K T)}\sum_{t = 1}^T\sqrt{\frac{1}{N_t[a_t,b_t]}} + 4\sqrt{T}\log(K T) \\
	& = 2\sqrt{\ln(K T)} \sum_{a < b} \sum_{\tau = 1}^{N_t[a,b]}\sqrt{ \frac{1}{\tau}} + 4\sqrt{T}\log(K T)\\
	& \overset{(iii)}{\leq} 4\sqrt{\ln(K T)} \sum_{a < b} \sqrt{ N_t[a,b]} + 4\sqrt{T}\log(K T)  \overset{(iv)}{\leq}  4\sqrt{\ln(K T)} \sqrt{K^2T} + 4\sqrt{T}\log(K T) 
\end{align*}
where $(i)$ follows from Lem. \ref{lem:rowreg}, $(ii)$ applies Azuma-Hoeffding's Inequality, (iii) uses $\sum_{i = 1}^n 1/\sqrt{i} \leq  2\sqrt{n}-1$, and (iv) applies Cauchy's Scharwz inequality. This concludes the proof.	
\end{proof}

\subsection{Technical Lemmas for Thm. \ref{thm:stdb}}

\subsubsection{Proof of Lem. \ref{lem:conf}}

\lemconf*

For any $\delta >0$, then, with probability at least $1-\delta$, for any $i,j \in [K]$%
\[
\hP_t[i,j]-C_t[i,j] \leq P[i,j] \leq  U_t[i,j] := \hP_t[i,j]+C_t[i,j],  \qquad \forall t \in [T] \,.
\]

\begin{proof}
	Suppose $\cG_t[i,j]$ denotes the event that at time $t \in [T]$ and  item-pair $i,j \in [K]$,  $\hP_t[i,j] - C_t[i,j] \leq P[i,j] \leq \hP_t[i,j] + C_t[i,j]$. 
	Note for any such that pair $(i,i)$, $\cG_t[i,i]$ always holds true for any $t \in [T]$ and $i \in [n]$, as $P_t[i,i] = U_t[i,i] = 0$ by definition. We can thus assume $i \neq j$. Moreover, for any $t$ and $i,j$, $\cG_t[i,j]$ holds if and only if $\cG_t[i,j]$, thus we will restrict our focus only to pairs $i < j$ for the rest of the proof. Hence, to prove the lemma it suffices to show
	\[
	\P \Big( \exists t \in [T], \exists i < j, \text{ such that } ~\cG^c_t[i,j] \Big) \leq \frac{1}{T} \, \,,
	\]
	which we do now.
	$\cG_t[i,j]$ can be rewritten as:
	\[
	|\hP_t[i,j] - P[i,j]| \le C_t[i,j] \,.
	\] 
	Let $\tau_{ij}(n)$ the time step $t \in [T]$ when the pair $(i,j)$ was updated (i.e. $i$ and $j$ was compared) for the $n^{th}$ time. 
	%
	%
	We now bound the probability of the confidence bound $(\cG_t[i,j])$ getting violated at any round $t \in [T]$ for some duel $(i,j)$ as follows:
	\begin{align*}
		\P \Big( & \exists t \in [T],i<j, \text{ such that } ~\cG^c_t[i,j] \Big) \\
		& \le \sum_{i < j}  \P \Bigg( \exists n \ge 0, |P[i,j] - \hP_{\tau_{ij}(n)}[i,j] | > C_t[i,j] \Bigg) \\
		& = \sum_{i < j}  \P \Bigg( \exists n \in [0,T], ~|P[i,j] - \hP_n[i,j] | > C_t[i,j] \Bigg),
	\end{align*}
	where $\hP_t[i,j] = \frac{W_t[i,j]}{W_t[i,j] + W_t[i,j]}$ is the frequentist estimate of $p[i,j]$ at round $t$ (after $n = N_t[i,j] \in [0,T]$ comparisons between arm $i$ and $j$). 
	Noting $N_{\tau_{ij}(n)}[i,j]= n$, $\tau_{ij}(n) > n$, and using Hoeffding's inequality, we further get
	\begin{align*}
		\P \Big( \exists t \in [T], & i<j, \text{ such that } ~\cG^c_t[i,j] \Big)
		\le \sum_{i < j} \Bigg [ \sum_{n=1}^{T}2e^{-2n\frac{\ln (K^2\tau_{ij}(n)^2/\delta)}{2n}} \Bigg]  \\
		& \leq \sum_{i < j} \Bigg [ \sum_{n=1}^{T} \frac{\delta}{K^2n^2}\Bigg] < \sum_{i < j} \Bigg [ \sum_{n=1}^{\infty} \frac{\delta}{K^2n^2}\Bigg] \\
		& < \Bigg [ \frac{K(K-1)}{2}\frac{\delta \pi^2}{K^2 6}\Bigg]  < \delta = \frac{1}{T}.
	\end{align*}
	This concludes the claim.
\end{proof}

\subsubsection{Proof of Lem. \ref{lem:rowreg}}

\lemrowreg*

\begin{proof}
	Note for any $\q \in \Delta_K$,
	\begin{align*}
		&\q^{\top}\P \p_t^\ell = \sum_{a^\star = 1}^{K}\sum_{a = 1}^{K}q[a^\star]p_t^\ell[a] P[a^\star,a] \leq \sum_{a^\star = 1}^{K}\sum_{a = 1}^{K}q[a^\star]p_t^\ell[a] U_t[a^\star,a]\\
		& = \max_{a^\star \in [K]}\sum_{a = 1}^{K}p_t^\ell[a] U_t[a^\star,a] \le \sum_{a,b \in [K]\times [K]}p_t[a,b] U_t[a,b] 
	\end{align*}
where the first inequality follows from Lem. \ref{lem:conf} and last inequality by the second inequality constraint of the CCE equations (see Eqn. \eqref{eq:cce}).

On the other hand for the right action (column player), similarly again for any $\q \in \Delta_K$:
	\begin{align*}
		&\q^\top \P \p_t^r = \sum_{a^\star = 1}^{K}\sum_{a = 1}^{K}q[a^\star]p_t^r[a] P[a^\star,a] \overset{\text{Lem.} \ref{lem:conf}}{\leq} \sum_{a^\star = 1}^{K}\sum_{a = 1}^{K}q[a^\star]p_t^r[a] U_t[a^\star,a]\\
		& = \max_{a^\star \in [K]}\sum_{a = 1}^{K}p_t^r[a] U_t[a^\star,a] \le \sum_{a,b \in [K]\times [K]}p_t[a,b] U_t[a,b] 
		\end{align*}
where the last inequality follows from first inequality constraint of the CCE equations (see Eqn. \eqref{eq:cce}).

Finally combining above two results and noting that for any $(a,b) \in [K]\times [K]$, $U_t[b,a] = \hP_t[b,a] + C_t[b,a] = -(\hP_t[a,b]+C_t[a,b]) + C_t[a,b] + C_t[b,a] = - U_t[a,b] + 2C_t[a,b]$, the claim follows.
\end{proof}

\subsection{Other Structured Function Classes}
\label{sec:linf_o}

\algc~ (Alg. \ref{alg:stdb}), analyzed above, can be extended to other parametric structured function classes as well which may support a statistical estimation based techniques, such as generalized linear function classes, etc. We briefly discuss the case for linear function classes here: 

\vspace{3pt}\noindent 
\textbf{Setting: Dueling-Bandits with Linear Realizability (\textrm{LinDB}$(d)$)} Consider $\cX \subseteq [-1,1]^{K\times K \times d}$, such that if $\x_t$ is the context received at time $t$ then  $f(x_t)[a,b]:= \w^\top \x_t[a,b] \in [-1,1]$ for any pair $(a,b) \in [K] \times [K]$, for some unknown $\w \in [-1,1]^d$. Note $f(\x) \in [-1,1]^{K \times K}$ for any $\x \in \cX$. Considering the same setup of Sec. \ref{sec:prob}, the goal is to again minimize the $\breg$. We detail the complete pseudocode in Alg. \ref{alg:linf} (\algl), and analyze its regret guarantee as follows:
 
\begin{restatable}[Expected regret for \textrm{LinDB}]{thm}{thmlinf}
	\label{thm:linf}
	For the setting of dueling bandits with linear-realizability \textrm{LinDB}$(d)$ class, we have
	$\breg_T \le O(d\log (KT)\sqrt{T}).$ 
\end{restatable}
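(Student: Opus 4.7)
The plan is to mirror the proof of Theorem~\ref{thm:stdb} for the non-contextual CCE algorithm, replacing the per-entry Hoeffding confidence intervals with ridge-regression (elliptic) confidence intervals for the unknown linear parameter $\w^\star$. First I would have \algl\ maintain the ridge estimator $\hat{\w}_t = V_t^{-1}\sum_{\tau<t} x_\tau[a_\tau,b_\tau]\, o_\tau$ with design matrix $V_t = \I_d + \sum_{\tau<t} x_\tau[a_\tau,b_\tau] x_\tau[a_\tau,b_\tau]^\top$, and define the per-pair width $C_t[a,b] := \beta_T \|x_t[a,b]\|_{V_t^{-1}}$ with $\beta_T = O(\sqrt{d\log(KT)})$ dictated by a self-normalized martingale bound. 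The upper-confidence matrix would then be $U_t[a,b] = \hat{\w}_t^\top x_t[a,b] + C_t[a,b]$ (with the diagonal zeroed and entries clipped to $[-1,1]$), and the action distribution $\p_t$ would be any CCE of $\U_t$ as in~\eqref{eq:cce}.

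I would then carry out the analysis in three steps. First, a standard self-normalized concentration argument supplies an analogue of Lemma~\ref{lem:conf}: with probability at least $1-1/T$, $|\w^{\star\top} x_t[a,b] - \hat{\w}_t^\top x_t[a,b]| \leq C_t[a,b]$ holds simultaneously for every $t$ and every $(a,b) \in [K]^2$, so in particular $f^\star(x_t)[a,b] \leq U_t[a,b]$. Second, I would verify that the CCE-based per-round decomposition of Lemma~\ref{lem:rowreg} carries over verbatim: its proof only uses (i) the pointwise upper-confidence property, and (ii) the skew-symmetric identity $U_t[b,a] = -U_t[a,b] + 2C_t[a,b]$. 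Skew-symmetry of $f^\star$ lets us assume $x_t[a,b] = -x_t[b,a]$ at the feature level, which makes $\hat{\w}_t^\top x_t[\cdot,\cdot]$ skew-symmetric, while $C_t$ is symmetric because $\|x_t[a,b]\|_{V_t^{-1}} = \|{-x_t[b,a]}\|_{V_t^{-1}}$. Both ingredients survive, yielding $\q^\top f^\star(x_t)(\p_t^\ell + \p_t^r) \leq 2\sum_{a,b} p_t[a,b]\, C_t[a,b]$ for every $\q\in\Delta_K$.

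Third, I would sum over $t$ and replace the expectation under $\p_t$ by the realized widths using Azuma-Hoeffding, producing $\breg_T \leq \beta_T \sum_{t=1}^T \|x_t[a_t,b_t]\|_{V_t^{-1}} + O(\sqrt{T\log(KT)})$. The elliptic potential lemma gives $\sum_t \|x_t[a_t,b_t]\|_{V_t^{-1}}^2 \leq 2d\log(1 + T/d)$, so Cauchy-Schwarz yields $\sum_t \|x_t[a_t,b_t]\|_{V_t^{-1}} \leq O(\sqrt{dT\log(T/d)})$. Combined with $\beta_T = O(\sqrt{d\log(KT)})$, this produces the claimed $O(d\log(KT)\sqrt{T})$ bound.

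The main obstacle I anticipate is bookkeeping: keeping $\U_t$ genuinely skew-symmetric so that the CCE decomposition from Theorem~\ref{thm:stdb} applies unchanged, and making sure the self-normalized radius $\beta_T$ is calibrated for the union over all pairs $(a,b)$ (which the self-normalized bound actually handles for free, provided we only feed the played pair into the regression). Once the feature map is arranged to enforce $x_t[a,b] = -x_t[b,a]$ (equivalently, we parametrize only the upper triangle and reflect), everything else is a routine composition of ridge-regression confidence tooling with the CCE argument already developed for \algc.
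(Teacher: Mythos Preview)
Your proposal is correct and mirrors exactly what the paper does: it states that the analysis of Theorem~\ref{thm:linf} ``follows exactly same as the proof of Thm.~\ref{thm:stdb} along with applying the standard concentration techniques from the linear bandits literature,'' citing the Abbasi-Yadkori et al.\ self-normalized bounds you invoke. Your three-step outline (elliptic confidence $\to$ CCE decomposition via the skew-symmetry identity $\to$ elliptic potential lemma with Cauchy--Schwarz) is precisely the intended instantiation, and in fact fills in more detail than the paper itself provides.
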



\begin{algorithm}[h]
	\caption{\textbf{\algl}}
	\label{alg:linf}
	\begin{algorithmic}[1]
		\STATE {\bfseries input (tuning parameters):} Regularizer $\lambda >0$, Exploration length $t_0>0$
		\STATE {\bfseries init:} Set $V_{1} := \lambda \I_d $, $Y_1 \leftarrow 0$, $X_1 \leftarrow \0_d \in \R^{d}$ 
		\FOR{$t = 1,2,\ldots T$}
		{
			\STATE $\hat \w \leftarrow (X_t^\top X_t + \lambda \I)^{-1}X_t^\top Y_t$ 
		}
		\STATE Receive the context vector $\x_t \in [-1,1]^{K \times K \times d}$
		\STATE Pairwise preference estimates: $\hP_t[a,b] \leftarrow \hat \w ^\top \x_t[a,b]$, for all $[a,b] \in [K] \times [K]$ 
		\STATE UCB estimates: $U_t[a,b] \leftarrow \hP_t[a,b] + \eta \sqrt{\x_t[a,b]^\top V_t^{-1} \x_t[a,b]}$. $U_t(a,a) \leftarrow 0$, for all $(a,b) \in [K] \times [K]$
		\STATE Find a policy $\p_t \in \Delta_{K \times K}$ (CCE of $U(t)$) such that:  
			\begin{align*}
			\nonumber & \sum_{a,b \in [K]\times [K]}p_t[a,b] U_t[a,b] \geq \max_{a^\star \in [K]}\Big[\sum_{b \in [K]}p_t^r[b] U_t[a^\star,b]\Big]\\
			&\sum_{a,b \in [K]\times [K]}p_t[a,b] U_t[b,a] \geq \max_{b^\star \in [K]}\Big[\sum_{a \in [K]}p_t^\ell[a] U_t[b^\star,a]\Big],
		\end{align*}
	\quad\quad\quad\quad\quad\quad\quad\quad\quad  where $p_t^\ell[\cdot]:= \sum_{b = 1}^Kp_t[\cdot,b]$ and $p_t^r[\cdot]:=\sum_{a = 1}^Kp_t[a,\cdot]$.
		\STATE Play $(a_t,b_t) \sim \p_t$. 
		\STATE Receive preference feedback $o_t \in \{-1,1\}$. 
		\STATE Update $V_{t+1} = V_t + \x_t[a_t,b_t]\x_t[a_t,b_t]^\top \in \R^{d \times d}$
		\STATE Update $Y_{t+1} \leftarrow [Y_t; o_t] \in \R^{t+1}$, $X_{t+1} \leftarrow [X_t; \x_t[a_t,b_t]] \in \R^{t+1 \times d}$
		\ENDFOR
	\end{algorithmic}
\end{algorithm}

Thm. \ref{thm:linf} gives the $\breg$\, of \algl\, (Alg. \ref{alg:linf} for the \textrm{LinDB} setup). The regret analysis of Thm. \ref{thm:linf} follows exactly same as the proof of Thm. \ref{thm:stdb} along with applying the standard concentration techniques from the linear bandits literature with proper tuning of $\lambda$ and $\eta$ (see \cite{Yadkori11,li17,CsabaNotes18} for details on concentration results of linear bandits).

\section{Appendix for Sec. \ref{sec:genf}}	


\subsection{Proof of Thm. \ref{thm:minmax}}
\label{app:main_thm_proof}

\begin{proof}
	Start by noting that when the learner plays $(a_t,b_t)$ from a product distribution s.t. $(a_t,b_t) \sim \p_t \times \p_t$. Then the best-response regret becomes:
	\begin{align*}
		\breg_T &:= \sum_{t=1}^T \max_{\q \in \Delta_K} \EE_{a \sim \q} \EE_{a_t \sim \p_t} \sbr{f^\star(x_t)[a, a_t]} = \sum_{t=1}^T \max_{\q \in \Delta_K} \q^\top f^\star \p_t\\
		& \leq \Big[ \frac{\gamma\regsq(T)}{4} + \sum_{t=1}^T \frac{5K}{\gamma}  \Big] = \sqrt{5 KT \regsq(T)}
	\end{align*} 
	where the last inequality follows from Lem. \ref{lem:minmax}, and last equality is due to choosing $\gamma = \sqrt{\frac{20 KT}{\regsq(T)}}$, which leads to the desired regret guarantee of Thm. \ref{thm:minmax}. Further, note we need the constraint $T \geq
	4K\regsq(T)$ since Lem. \ref{lem:minmax} requires $\gamma > 2K$. And since we set $\gamma =  \sqrt{\frac{20 KT}{\regsq(T)}}$, this is satisfied only if $T \geq
	4K\regsq(T)$.  
\end{proof}


\subsection{Simpler Analysis of Thm. \ref{thm:minmax} (using Eqn. \ref{eq:Vgam})}
\label{app:simpler_calculation}

Given the choice of $\p_t$ as shown in Eqn. \ref{eq:Vgam}, we now give  {simpler and more direct proof} of Thm. \ref{thm:minmax}. 
%
Recall we assume the that dueling arms $(a_t,b_t)$ are drawn from a product measure $(a_t,b_t) \sim \p_t \times \p_t$ for some $\p_t \in
\Delta_K$ at each round. 
Now suppose we find $\p_t$ that satisfies Eqn. \eqref{eq:Vgam}.


Then for any $f^\star$ and any $\q$, using Eqn. \eqref{eq:Vgam} we have:
\begin{align*}
	\q^\top f^\star \p_t &\leq \q^\top (f^\star - \hat{Y}_t) \p_t - \frac{2}{\gamma} \sum_a \frac{q[a]}{p_t[a]} + \frac{5K}{\gamma}\\
	& = \sum_{a=1}^K \frac{q[a]}{\sqrt{p_t[a] \gamma}} \cdot \sqrt{p_t[a]\gamma} (f^\star[a,\cdot] - \hat{Y}_t[a,\cdot]) \p_t - \frac{2}{\gamma} \sum_a \frac{q[a]}{p_t[a]} + \frac{5K}{\gamma}\\
	& = \sum_{a=1}^K \sqrt{\frac{q[a]^2}{{p_t[a] \gamma}} \cdot {p_t[a]\gamma} \big((f^\star[a,\cdot] - \hat{Y}_t[a,\cdot]) \p_t\big)^2} - \frac{2}{\gamma} \sum_a \frac{q[a]}{p_t[a]} + \frac{5K}{\gamma}\\
	& \leq \frac{1}{2\gamma} \sum_a \frac{q[a]^2}{p_t[a]} + \frac{\gamma}{2} \sum_a p_t[a] \rbr{ (f^\star[a,\cdot] - \hat{Y}_t[a,\cdot]) \p_t}^2 - \frac{2}{\gamma} \sum_a \frac{q[a]}{p_t[a]} + \frac{5K}{\gamma}\\
	& \leq \frac{2}{\gamma} \sum_a \frac{q[a]}{p_t[a]} + \frac{\gamma}{2} \sum_a p_t[a] \rbr{ (f^\star[a,\cdot] - \hat{Y}_t[a,\cdot]) \p_t}^2 - \frac{2}{\gamma} \sum_a \frac{q[a]}{p_t[a]} + \frac{5K}{\gamma}\\
	& = \frac{\gamma}{2} \sum_a p_t[a] \rbr{ (f^\star[a,\cdot] - \hat{Y}_t[a,\cdot]) \p_t}^2 + \frac{5K}{\gamma}\\
	& = \frac{\gamma}{2} \sum_a p_t[a] \rbr{ \EE_{b \sim \p_t} (f^\star[a,b] - \hat{Y}_t[a,b])}^2 + \frac{5K}{\gamma}\\
	& \leq \frac{\gamma}{2} \sum_{a,b} p_t[a]p_t[b] (f^\star[a,b] - \hat{Y}_t[a,b])^2 + \frac{5K}{\gamma},
\end{align*}
where the first inequality follows from Eqn. \eqref{eq:Vgam}, the second inequality uses AM-GM inequality: For any $x,y \in \R_+, \frac{x+y}{2} > \sqrt{xy}$, and lastly we use $q[a]^2 < q[a]$ in the third inequality. 

Then proceeding similar to the proof of Thm. \ref{thm:minmax}, the best-response regret of any learner playing $(a_t,b_t) \sim \p_t \times \p_t$ becomes:
\begin{align*}
	\breg_T &:= \sum_{t=1}^T \max_{\q \in \Delta_K} \EE_{a \sim \q} \EE_{a_t \sim \p_t} \sbr{f^\star(x_t)[a, a_t]} = \sum_{t=1}^T \max_{\q \in \Delta_K} \q^\top f^\star \p_t\\
	& \leq \Big[ \frac{\gamma\regsq(T)}{2} + \sum_{t=1}^T \frac{5K}{\gamma}  \Big] = \sqrt{10 KT \regsq(T)}
\end{align*} 
setting $\gamma = \sqrt{\frac{10KT}{\regsq}}$ following the similar line of argument as shown in the proof of Thm. \ref{thm:minmax}. This yields the desired $O(\sqrt{KT \regsq})$ regret guarantee of Alg. \ref{alg:genf}. 


\subsection{Regret Bound of Alg. \ref{alg:genf} for some special realizability function classes}
\label{app:spcl_reg} 

	We analyze some special function classes and derive the $\breg$ \, of Alg. \ref{alg:genf} for each cases using the specific regression oracles (recall the details and notations from Rem. \ref{rem:cases} and Appendix \ref{sec:eg_oreg}): 
	
	\begin{enumerate}
	\item Any finite regression class $\cF$ such that $|\cF| < \infty$: Since we can have \oreg\, oracle such that $\regsq(T) \leq 2 \log |\cF|$, this yields $\breg_T[\algf] = O(\sqrt{KT \log |\cF|})$. For example in the standard $K$-armed (non-contextual) dueling bandit setting, one can construct $\cF$ such that $\log |\cF| = O\big(K^2\log (KT)\big)$, and hence implying the $\breg(T)$ of Alg. \ref{alg:genf} to be $O\big(\sqrt{K^3T\log (KT)}\big)$ in this case, which is though $O(K)$ multiplicative factor worse than the optimal rate \cite{CDB}.
	\item Class of linear predictors $\cF:= \{(x,a) \mapsto \langle \theta,x_a\rangle \mid \theta \in \R^d, \|\theta\|\le 1\}$. Here we can have \oreg\, oracle with $\regsq(T) \leq d\log(T/d)$, which yields $\breg_T[\algf] = O(\sqrt{d K T \log(T/d)})$.   
	\item Alternatively, for linear predictors, instantiating $\oreg$ as online gradient descent guarantees $\regsq(T) \leq O(\sqrt{T})$ and hence $\breg(T) \leq O(K^{1/2} T^{3/4})$. 
	\item Class of generalized linear predictors $\cF:= \{(x,a) \mapsto \sigma(\langle \theta,x_a\rangle) \mid \theta \in \R^d, \|\theta\|\le 1\}$. Since we can have \oreg\, oracle such that $\regsq(T) \leq d\log T/ \kappa_\sigma^2$, one can achieve $\breg_T[\algf] = O\Big(\sqrt{\frac{d K T \log T}{\kappa_\sigma^2}}\Big)$.   
	\item Reproducing Kernel Hilbert Space (RKHS) $\cF:= \{f \mid \|f\|_{\cH} \le 1, \cK(x_a,x_a) \le 1\}$: Since we can have \oreg\, oracle such that $\regsq(T) \leq O(\sqrt T)$, one can achieve\\ $\breg_T[\algf] = O(\sqrt{KT^{3/4}})$.
	\item Banach Spaces $\cF:= \{(x,a) \mapsto \langle\theta,x_a \rangle \mid \theta \in \cB, \|\theta\| \le 1\}$. Since we can have \oreg\, oracle such that $\regsq(T) \leq (T/D)^{1/2}$, one can achieve $\breg_T[\algf] = $ $O(\sqrt{KD^{-1/2}T^{3/4}})$.
	\end{enumerate}
	
}

\end{document}